\newtheorem{theorem}{Theorem}
\newtheorem{lemma}{Lemma}
\newtheorem{example}{Example}
\newtheorem{proof}{Proof}
\newcommand{\todo}[1]{{\large\bf TODO: }#1\ensuremath{\Box}}
\newcommand{\nop}[1]{}
\def\QED{{\phantom{x}} \hfill $\Box$}
\begin{document}

\title{Auction optimization with models learned from data}

\author{Sicco Verwer\thanks{Delft University of Technology, The Netherlands; Email: \texttt{ S.E.Verwer@tudelft.nl}.}  \and Yingqian Zhang\thanks{Department of Econometrics; Erasmus University Rotterdam; The Netherlands; Email: \texttt{yqzhang@ese.eur.nl}.} \and Qing Chuan Ye\thanks{Department of Econometrics; Erasmus University Rotterdam; The Netherlands; Email: \texttt{ye@ese.eur.nl}}}
\date{}

\maketitle
\begin{abstract}

In a sequential auction with multiple bidding agents, it is highly challenging to determine the ordering of the items to sell in order to maximize the revenue due to the fact that the autonomy and private information of the agents heavily influence the outcome of the auction.  

The main contribution of this paper is two-fold. First, we demonstrate how to apply machine learning techniques to solve the optimal ordering problem in sequential auctions.
We learn regression models from historical auctions, which are subsequently used to predict the expected value of orderings for new auctions. Given the learned models,
we propose two types of optimization methods: a black-box best-first search approach, and a novel white-box approach that maps learned models to integer linear programs (ILP) which can then be solved by any ILP-solver. Although the studied auction design problem is hard, our proposed optimization methods obtain good orderings with high revenues.

Our second main contribution is the insight that the internal structure of regression models
can be efficiently evaluated inside an ILP solver for optimization purposes. To this end, we provide efficient encodings of regression trees and linear regression models as ILP constraints. This new way of using learned models for optimization is promising. As the experimental results show, it significantly outperforms the black-box best-first search in nearly all settings.

\end{abstract}

\section{Introduction}

One of the main challenges of mathematical optimization is to construct a mathematical model describing the properties of a system. 
When the structure of the system cannot be fully determined from the knowledge at hand, machine learning and data mining techniques have been used in optimization instead of such models.
They have, for example, been used as decision values~\cite{Gabel08}, fitness functions~\cite{huyet06}, or model parameters~\cite{LiO05}. To the best of our knowledge, so far models learned by data mining or machine learning have only been used for optimization in a black-box manner, e.g., using only the predictions of learned models but not their internal structure. In contrast to such black-box approaches, in this paper, we develop a novel white-box optimization method, that is, we map entire models to a set of integer linear programming constraints such that the internal structure of the models is used during problem solving.
The proposed white-box method together with a black-box method also provide a solution to an optimization problem of key interest to the artificial intelligence and operations research communities: auction design. We briefly introduce this domain before going into the details of our methods.

\subsection{Sequential auction design}

Auctions are becoming increasingly popular for allocating resources or items in business-to-business and business-to-customer markets. 
Often sequential auctions~\cite{Bernhardt} are adopted in practice, where items are sold consecutively to bidders. Sequential auctions are in particular desirable when the number of items for sale is large (e.g.,~flower auctions~\cite{Heck97experiences}), or when the buyers enter and leave the auction dynamically (e.g.,~online auctions~\cite{Pinker10}).
In a sequential auction, an auctioneer may tune several auction parameters to influence the outcome of an auction, such as reserve prices for items and in which order to sell them. In other words, (s)he can design auctions for the purpose of achieving some predefined goal.
In this paper, we solve one specific auction design problem, namely, deciding the optimal ordering of items to sell in a sequential auction in order to maximize the expected revenue (OOSA in short). We assume bidders in such auctions are budget constrained. This is a highly relevant problem in today's auctions since bidders almost always have limited budget, as seen for instance in industrial procurement~\cite{Gallien05smart}. 
Previous research has shown that with the presence of budget constraints, the revenue collected by the auctioneer is heavily dependent on the  ordering of items to sell~\cite{Elmaghraby03importance,Grether09,Raviv06}. This holds already for a toy problem with 2 items.
Let us use a simple example to illustrate the importance of ordering in such cases. 
\begin{example}\label{exp1}
Two agents $A_1$ and $A_2$ take part in a sequential auction of items. For sale are items $r_1$ and $r_2$. The values of these items for the agents are given as follows: $v_{A_1}(r_1)=5, v_{A_1}(r_2) = 6, v_{A_2}(r_1) = 0, v_{A_2}(r_2) = 5$.
In addition, both $A_1$ and $A_2$ have a budget limit of $5$. Assume the reserve prices (i.e., the lowest price at which the auctioneer is willing to sell) for both items are 4.

Suppose the items are sold by means of first-price, English auction\footnote{The English auction that we consider is the one where the starting price is the reserve price, and bidders bid openly against each other. Each subsequent bid should be higher than the previous bid, and the item is sold to the highest bidder at a price equal to her bid.}. We assume a simple bidding strategy in this example. Each agent $A_j$'s reservation price (i.e., the highest price $j$ is willing to pay) for item $r_i$ is $v_{A_j}(r_i)-1$. Agents bid myopically on each item, i.e., they will bid up to their reservation price, if their budgets allow.\footnote{We will discuss different bidding strategies in Section~\ref{sec:exp}.} The auctioneer's goal is to maximize the revenue, i.e., the total sold price of the items.

Consider one situation where the auctioneer sells  first $r_2$ and then $r_1$.
$A_1$ will get $r_2$ when she bids 5, and $r_1$ will not be sold since $A_2$ is not interested in it and $A_1$ is out of budget. The total revenue is $5$. However, if the selling sequence is ($r_1$, $r_2$), $A_1$ will win $r_1$ with the reserve price 4, and then $A_2$ will win $r_2$ with price 4. 
The collected revenue is 8 in this case.
\QED
\end{example}

Most of the current approaches to the ordering problem in sequential auctions assume a very restricted market environment. They either study the problem of ordering two items, see~\cite{Subramaniam09optimal,Pitchik09budget}, or a market with homogeneous bidders~\cite{Elkind07maximizing}. To the best of our knowledge, we are the first to consider how to order items for realistic auction settings with many heterogeneous bidders competing for many different items. 
This problem is highly complex---a good design on ordering needs to take care of many uncertainties in the system.
For instance, in order to evaluate the revenue given an ordering, the optimization algorithm needs to know the bidders' budgets and preferences on items, which is usually private and unshared. Furthermore, the large variety of possible bidding strategies that bidders may use in auctions are unknown. 
This auction design problem is a typical example where the mathematical optimization model cannot be fully determined, and hence, machine learning and data mining techniques can come into play. This is exactly what our approach builds upon.

\subsection{Learning models for white-box and black-box optimization} Nowadays more and more auctions utilize information technology, which makes it possible to automatically store detailed information about previous auctions along with their orderings and the selling price per auctioned item.
Our approach to solving the problem of optimal ordering for sequential auctions starts with the historical auction data. We define and compute several relevant features and then use them to learn 
regression trees and linear regression models
for the expected revenue. 
Given the models, we propose two approaches to find the optimal ordering for a new set of items:
(1) a best-first search that uses the models as a black-box to evaluate different orderings of the items;
and (2) a novel white-box optimization method that 
translates the models and the set of items into a mixed-integer program (MIP) and runs this in an ILP-solver (CPLEX). Figure~\ref{fig:white-black} displays the general framework of our approaches using these two optimization methods.

\begin{figure}[thb]
\begin{center}
\includegraphics[width=.8\columnwidth]{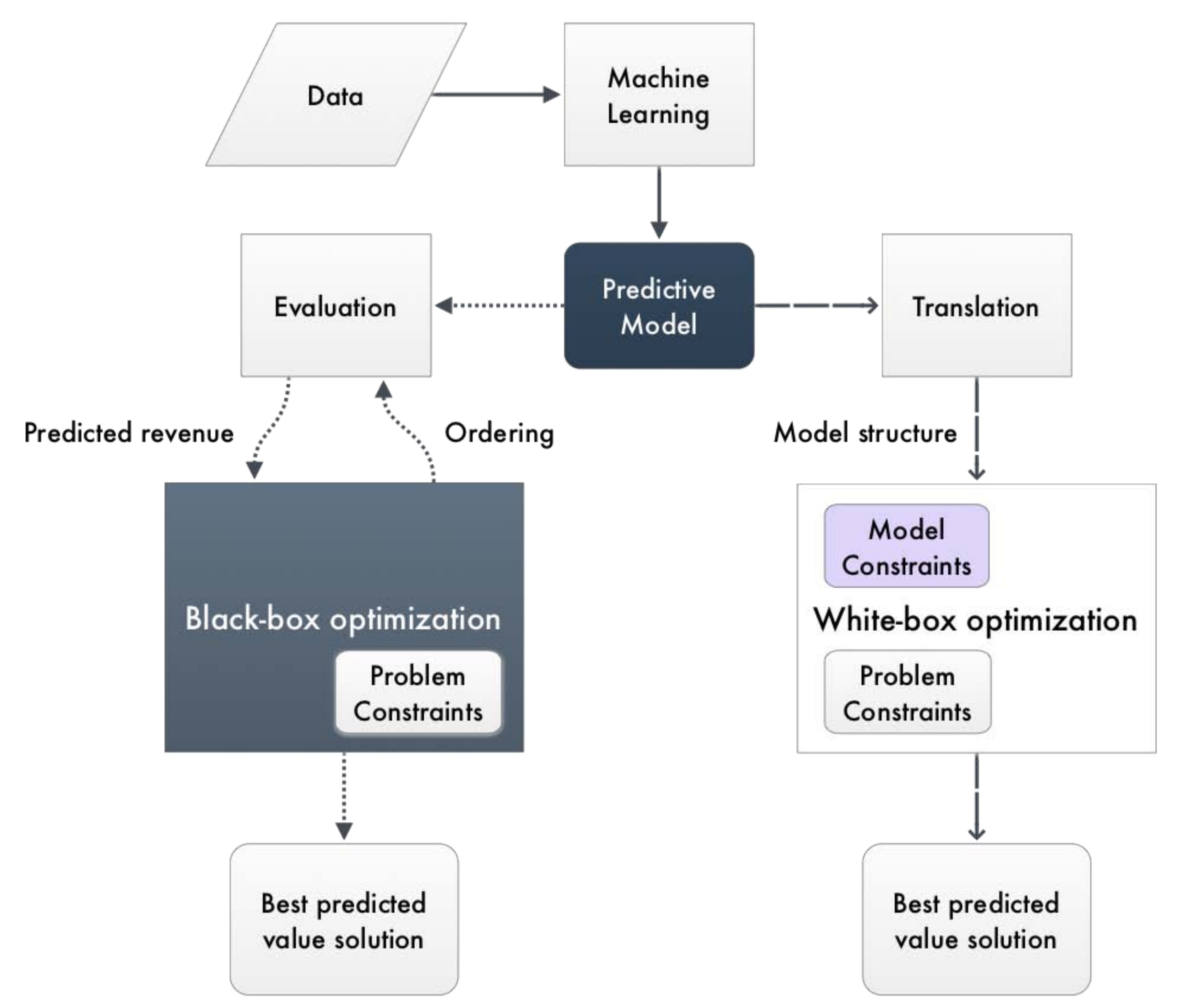}
\end{center}
\caption{Solving OOSA using white-box optimization and black-box optimization with learned models. Black-box optimization only calls the predictive model to evaluate possible orderings. White-box optimization translates the internal structure of the predictive model to MIP constraints.\label{fig:white-black}}
\end{figure}

Just like the traditional black-box optimization approach (see, e.g.~\cite{jones1998efficient, shan2010survey}), our best-first search is ignorant of the internal structure of the models and only calls it to perform function evaluations, i.e., predicting the revenue of an ordering of the items. 
Optimization is possible by means of a search procedure that 
uses heuristics to produce new orderings depending on previously evaluated ones. Our best-first search makes use of dynamic programming cuts inspired by sequential decision making in order to reduce the search space. 

One of the main contributions of this paper is the realization that learned models can be evaluated efficiently inside modern mathematical optimization solvers.
This evaluation includes the computation of feature values (the input to machine learning), the evaluation of these features using a learned model (the output from machine learning), and a possible feedback from such evaluations to new features. 
In this paper, we efficiently translate all of these steps for two types of learned models (regression trees and linear regression models) into mixed-integer constraints. The resulting mixed-integer program can then be evaluated in any modern integer linear programming (ILP) solver. We call the resulting method white-box optimization with learned models.

In this way, modern exact solvers can be used instead of a heuristic search. These solvers use (amongst others) advanced branch-and-bound methods to cut the search space, compute and optimize a dual solution, and can prove optimality without testing every possible solution. This is the main benefit of using the white-box method over a black-box one.
The downside, however, is that when the learned model is complex, the white-box method may lead to a large mathematical model that is difficult to optimize. 
We compare these two approaches and investigate this trade-off by applying them to the OOSA problem.



\nop{
\todo{Move to discussion and conclusion?}
Because the problem is hard, our proposal is to use modern solvers for known hard problems, in our case mixed-integer linear programs (MIP). In order to use such a solver, it is necessary to first translate the optimization problem instance to an instance of the known hard problem such that the translation of solution values is monotonic. A solution found by the solver can then be translated back into a solution to the original problem. These translations cause some overhead, but this is more then compensated by the fact that these solvers make use of many modern solving methods such as branch-and-bound methods, intelligent back-jumping, constraint learning, local neighborhood search, and more. Furthermore, since these solvers are under active development, any method that uses them will improve over time semi-automatically. Methods based on translations and using modern solvers instead of dedicated algorithms have been shown to be very competitive in many different problem domains, see, e.g.,~\cite{}.
}

\paragraph{Contributions and organization}

Although we use sequential auction design to illustrate our method, all of our constructions are general and can be applied to any optimization setting where unknown relations can be represented using regression models that have been learned from data. The only constraint for using the white-box method is that the feature values need to be computable using integer linear functions from intermediate solutions. Our approach can thus be applied to complex optimization settings where entire orders, schedules, or plans need to be constructed beforehand.

We list our contributions as follows:

\begin{itemize}
\item We demonstrate how to apply regression methods from machine learning to OOSA.
\item We give an efficient encoding of regression trees and linear regressors into MIP constraints.
\item We prove OOSA with budget constrained bidders to be NP-hard, also when using these regression models.
\item We provide the first method that tackles OOSA in realistic settings.
\item We automate the construction of mathematical models using machine learning.
\end{itemize}

In Section~\ref{sec:auction}, we formally introduce the problem of optimal ordering for sequential auctions (OOSA), and then we show how to learn regression models from historical auction data in Section~\ref{sec:representing} using standard machine learning methods. Based on the learned models, our white-box optimization method and a black-box optimization are introduced to find the optimal ordering for OOSA in Section~\ref{sec:translate}. Extensive experiments are presented in Section~\ref{sec:exp} where we compare the performance of the two proposed optimization methods using both the learned models and the auction simulator. Before we conclude, we compare and discuss more related works in Section~\ref{sec:related}.

\section{Optimal ordering for sequential auctions OOSA}\label{sec:auction}


We assume there is a finite set of bidders (or agents). Let $R = \{r_1,\ldots,r_l\}$ denote the collection of
the item types, and the quantity of each item type can be more than 1.  
When it is clear from the context, we will slightly abuse the notation and use $I=\{r_1,r_2\ldots,r_1,\ldots\}$ to denote the multiset of all available items.
Each bidder agent $i$ has a valuation (or preference) for each type of item
$v_i:R \rightarrow \mathbb{R^+}$.
In addition, each agent has a budget $b_i$ on purchasing items, and (s)he desires to win as many items as are being auctioned within the budget limit.

In one auction, a set of items $I$ with type set $R'\subseteq R$ will be auctioned sequentially using a predetermined order. 
For example, given types $r_1$ and $r_2$ with quantities of 1 and 2 respectively, there are three possible orderings of resources: $(r_1,r_2,r_2)$, $(r_2,r_1,r_2)$, and $(r_2,r_2,r_1)$.
For each $r_j$ that is being auctioned, agent $i$ puts a bid on $r_j$ based on its valuation function if its remaining budget allows. 
The agent who bids highest on $r_j$ wins $r_j$. This sequential auction ends when all items have been auctioned, or when all agents run out of their respective budgets.

We assume that such an auction is repeated over time, and 
each auction sells possibly different items $I$.
At the end of each auction, we have the following information at our disposal: (1) the ordering of auctioned items; and (2) the allocation of items to agents with their winning bids. The optimization problem we study is: \emph{given a set of items and budget constrained bidders, finding an optimal ordering of items in sequential auctions such that the expected revenue is maximized}. We call it OOSA.
We now show that the decision version of this optimization problem is NP-hard, even if we have complete information on bidders' preferences and they are not strategic (i.e., they bid truthfully according to their true preferences). 

\begin{theorem}\label{thrm:complexity} Given a set of items $R$,  preferences $v_i:R \rightarrow \mathbb{R^+}$, and budgets $b_i$ for every bidder $i$. The problem of deciding whether there exists an ordering that obtains a revenue of at least $K \in \mathbb{R^+}$ is NP-hard.
\end{theorem}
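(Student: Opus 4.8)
The plan is to prove NP-hardness by a polynomial-time reduction from \textsc{Partition}: given positive integers $a_1,\dots,a_n$ with $\sum_{i=1}^{n} a_i = 2T$, decide whether some $W\subseteq\{1,\dots,n\}$ has $\sum_{i\in W}a_i = T$. From such an instance I would build the OOSA instance that isolates, in its barest form, the effect of Example~\ref{exp1}: a single budget-constrained bidder whose set of winning items along an ordering is forced to encode a subset. Concretely, take one bidder $A$ with budget $b_A = T$ and, for each $i$, one item $r_i$ of quantity one with value $v_A(r_i) = a_i$ and reserve price $a_i$ (the auction setting of Section~\ref{sec:auction} lets the auctioneer fix reserve prices; adding a further bidder who values every item at $0$ changes nothing, and makes $A$ non-strategic trivially). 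Because the reserve of $r_i$ equals $a_i$, bidder $A$ can bid on $r_i$ only when her remaining budget is at least $a_i$, and then, being the only bidder able to meet the reserve, she wins $r_i$ at exactly $a_i$; if her remaining budget is below $a_i$ the item goes unsold and contributes $0$. Hence, for \emph{any} ordering the total revenue equals $\sum_{i\in W}a_i$, where $W$ is the (ordering-dependent) set of items that $A$ ends up winning, and $\sum_{i\in W}a_i \le b_A = T$ always holds. Finally set $K = T$.

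The two directions are then short. If some $W^\ast$ satisfies $\sum_{i\in W^\ast}a_i = T$, order the items of $W^\ast$ first (in any order) and the rest afterwards; a one-line induction on the prefix shows that when $A$ is about to face an item $r_i\in W^\ast$ her remaining budget equals the total value of the still-unsold items of $W^\ast$, which is at least $a_i$, so she wins every item of $W^\ast$, her budget then drops to $0$, and no later item can be sold, giving revenue exactly $T = K$. Conversely, for any ordering the winning set $W$ is a genuine subset of $\{1,\dots,n\}$ with revenue $\sum_{i\in W}a_i \le T$; so a revenue of at least $K = T$ forces $\sum_{i\in W}a_i = T$, i.e.\ a \textsc{Partition}. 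The reduction is computable in time polynomial in the encoding of the $a_i$, uses truthful non-strategic bidders and an auctioneer with full information on $v_A$ and $b_A$ (matching the hypotheses of the statement), and \textsc{Partition} is NP-complete, so OOSA is NP-hard.

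The step that needs genuine care, and where I expect most of the work to go, is making the sequential-auction outcome fully deterministic and computable in closed form for a given ordering: pinning down who wins each $r_i$ and at what price under the first-price English rule, including ties, the reserve-price floor, and the ``skip an unaffordable item, possibly buy a cheaper one later'' behaviour of a budget-limited bidder, so that the revenue of an ordering is provably the clean quantity $\sum_{i\in W}a_i$ used above. Choosing the reserve of $r_i$ equal to $a_i = v_A(r_i)$ is precisely what keeps this clean: $A$ either affords the reserve and pays exactly $a_i$, or the item simply goes unsold with a known (zero) contribution, so no price depends on residual budgets. If one insists on a model without per-item reserve prices, the analysis becomes more delicate, since in a first-price auction the price of a contested item is driven by the losing bidder's budget-capped bid rather than by a fixed floor; one then has to replace the reserve gadget by an auxiliary competitor for each $r_i$ and argue, via a fixed tie-breaking rule and a small perturbation of the values, that the revenue of an ordering is still a clean monotone function of the chosen subset.
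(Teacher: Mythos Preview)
Your reduction is correct and, like the paper's, starts from \textsc{Partition}, but the gadget is genuinely different and in fact simpler. The paper introduces \emph{two} active bidders with $v_1(r_k)=2i_k$, $v_2(r_k)=2i_k+1$, budgets $\tfrac{1}{2}\sum I$ and $\sum I$, reserve prices $i_k$, and target $K=\tfrac{3}{2}\sum I$; the intended dynamics are that bidder~2 wins a prefix at competitive prices $2i_k$ until her budget is exhausted, after which bidder~1 sweeps the rest at the reserve. Your construction instead uses a \emph{single} bidder with budget $T$ facing items whose reserve equals her value $a_i$, so the winning set for any ordering is exactly a subset with sum at most $T$, and revenue $\ge T$ forces an exact partition. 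What your approach buys is precisely what you identify in your third paragraph: the per-item price is pinned to the reserve $a_i$ regardless of residual budgets, so there is no competitive-bidding phase whose outcome depends on the interleaving of two shrinking budgets. The paper's two-bidder proof is a little delicate in the $(\Leftarrow)$ direction for just this reason (one has to rule out intermediate states where bidder~2's remaining budget lies strictly between $i_k$ and $2i_k$), whereas your argument sidesteps that entirely.

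One small point worth tightening: setting $v_A(r_i)=a_i$ exactly equal to the reserve relies on a bidding rule in which the bidder is willing to pay \emph{up to} her value (rather than strictly below it, as in Example~\ref{exp1}). To make the construction robust to either convention, take $v_A(r_i)=a_i+1$ (or any value strictly above $a_i$) while keeping the reserve at $a_i$; since $A$ is uncontested she still pays exactly the reserve $a_i$, and the rest of your argument goes through verbatim.
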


\begin{proof}

By reduction from the well-known NP-hard partition problem~\cite{Garey79}: Given a set of integers $I = \{ i_1, \ldots, i_n \}$, is $I$ dividable into two sets $A$ and $B$ such that $\sum A = \sum B$? We need two bidders with preferences such that $v_1(r_k) = 2 \cdot i_k $ and $v_2(r_k) = 2 \cdot i_k + 1$ for $1 \leq k \leq n$. The reserve price for each item $k$ ($1 \leq k \leq n$) is $i_k$. The agents' budgets are $b_1 = \frac{1}{2} \sum I$ and $b_2 = \sum I$. The set of items $R$ is $\{r_1,\ldots,r_n\}$ and $K$ is $\frac{3}{2} \sum I$. We claim that $I$ is partitionable into two sets with equal sums if and only if there exists an ordering that obtains a revenue of $K$ (or more).

$(\Rightarrow)$ Given a partition of $I$ into sets $A$ and $B$, we sell all items in $A$ first, and those in $B$ later. In this case, agent $2$ will buy all items in $A$ with price $2\cdot i_k$ as it is the minimal bid to win the items from agent $1$. After buying all items in $A$, agent $2$ will have spent $2 \cdot \sum_{i_k \in A} i_k$, which makes $\sum I$ in total (since $\sum_{i_k \in A} i_k = \frac{1}{2} \sum I$). This is the entire budget of agent $2$. All items in $B$ are then sold to agent $1$ with the reserve price $i_k$. Thus agent $1$ pays $\sum_{i_k \in B} i_k = \frac{1}{2} \sum I$. This makes a total revenue of $ \sum I + \frac{1}{2}\sum I = \frac{3}{2} \sum I = K$.

$(\Leftarrow)$ Suppose we have an ordering such that agent $1$ and $2$ spend all of their budget ($K$ in total). This means that agent $2$ wins the first set of items A, each costing $2 \cdot i_k$ till it uses all its budget. Thus we have $2 \cdot \sum _{k \in A} i_k = \sum I $, i.e., $ \sum _{k \in A} i_k = \frac{1}{2}\sum I $. Agent $1$ pays $i_k$ for the remaining items in $B$, $ B= I\setminus A$, and it uses all its money: $\sum _{k \in B} i_k = \frac{1}{2} \sum I$. Hence, we have a partition of $I$ where $\sum A = \sum B$.

The construction is clearly polynomial time.
\end{proof}


Several related works deal with this type of ordering optimization problem. 
For example, in the Economics literature, the authors of \cite{Subramaniam09optimal} investigate the optimal ordering strategy for the case where the auctioneer has two items to sell. They show that when the items are different in value, the higher valued items should be auctioned first in order to increase the seller's revenue. Pitchik~\cite{Pitchik09budget} points out that in the presence of budget constraints, in a sealed-bid sequential auction, if the bidder who wins the first good has a higher income than the other one, the expected revenue is maximized.
With these greatly simplified auction settings, it is possible to derive bidders' equilibrium bidding strategies. Hence, with some assumptions on the distributions of bidders' budgets and valuations of items, the optimal ordering can be theoretically derived. However, as real-world auctions are much more complex and uncertain in terms of the sizes of items/bidders, agents' preferences and bidding strategies, these existing results cannot be applied.
In this paper, we instead focus on learning the overall behaviors of the group of bidders from historical auction data by machine learning  techniques, as the first step of solving   OOSA.

In order to apply ML techniques, we assume in every sequential auction the set of participating bidders and their characteristics (preferences, budgets, bidding strategies) to be similar.
This simplifies the problem of learning a good ordering. Instead of learning the individual valuations/budget/bidding strategies of agents, we can treat the agent population as a single entity for which we need to find a single global function.
 Obviously, such an approach will fail if the agents are radically different in every auction. However we consider this assumption sensible in many auction settings such as industrial procurement auctions where the same companies repeatedly join the auctions with similar interests, and the Dutch flower auction where there can be different bidders every day, but it seldom occurs that one day bidders are only interested in roses and the next day they only want tulips.
Although the different participants can be interested in different item types, the interests of the group of participants remain stable.

\section{Learning predictive models for OOSA}\label{sec:representing}


At the end of each sequential auction, we have the following information at our disposal: (1) the ordering of auctioned items; and (2) the revenue of each sold item.
We need to find a suitable way to model the expected revenue of such orderings. An ordering can be thought of as a sequence of items. However, to the best of our knowledge none of the existing sequence models fit our auction setting, see also Section~\ref{sec:mdp}.  What comes closest to our auction setting are models such as Markov decision processes (MDPs)~\cite{mdp}. These directly model the expected price per item and come with methods that can be used to optimize the expected total reward (revenue). However, an implementation of the auction design problem as an MDP is not easy. We discuss this in more detail in Section~\ref{sec:mdp}.

\subsection{A regression model for OOSA}
Instead of representing our problem with a sequence model such as an MDP, we view the prediction of an auction's outcome as a regression problem. We split this problem into the subproblems of predicting the value of the auctioned items. We then sum these up to obtain the overall objective function:
\[
V(r_1 \ldots r_n) = \sum_{1 \leq k \leq n} R(r_k,\{r_j \mid j < k\},\{r_l \mid k < l\}),
\]
where $R(r_k,J,L)$ is a regression function that determines the expected value of $r_k$ given that $J$ was auctioned before and $L$ will be auctioned afterwards. 
The main benefit of this representation is that modern machine learning methods can be used to learn this function $R$ from data. In addition, since every item sold represents a single sample, every auction contains many samples that can be used for learning, further reducing the amount of required data.

\subsection{The regression functions}
In this paper, we use regression trees~\cite{cart84} and least absolute shrinkage and selection operator (LASSO)~\cite{Tibshirani94regressionshrinkage} as regression functions, 
and train them using features based on the items auctioned before and after the current item. We first briefly introduce these regressors.

Regression trees are a form of decision trees where the predicted values are real numbers. A decision tree is one of the most popular predictive models for mapping feature values to a target value. It is a tree-shaped graph with a root node, interior nodes, and leaf nodes. The root and every interior node contains a Boolean test for a specific feature value $f$, such as $f > 5$. Every leaf node contains an output value $v$. It maps the feature values to an output by performing all the tests along a path from the root to a leaf. For every test performed, if the outcome is true ($f > 5$), the path is continued along the left branch, if the outcome is false ($f \leq 5$), the path is continued along the right branch. Once a leaf is reached, it outputs the value it contains $v$.

A regression tree learner aims to find a tree that minimizes the mean squared error of the predicted and the actual observed values. Most regression tree learning algorithms follow a greedy strategy that splits interior nodes as long as the decrease in error is significant. A split replaces one leaf node by an interior node connected with two new leaf nodes. The interior node receives as Boolean constraint one that minimizes the mean-squared error of the resulting tree, where the leaf nodes predict the mean value of all observed data values that end up in that leaf after mapping all data samples to leaf nodes.

LASSO is a method for constructing a linear regression function $v(f_1,\ldots,f_m) = c_1f_1 + c_2f_2 + \ldots + c_mf_m + d$, where $v$ is the value to predict, $c_i$ are constants, $f_i$ are feature values, and $d$ is the intercept. The standard approach to find such a function is to minimize the mean squared error, which is easy to compute. LASSO is a popular regularized version of this simple estimation that penalizes the absolute values of the regression coefficients $c_1, \ldots, c_m$. Formally, given a dataset of features $f_i^d$ and target values $v^d$, with $1 \leq d \leq k$ where $k$ is the number of samples, it uses convex optimization in order to find a regression function that solves the following problem\footnote{This is the version implemented in the scikitlearn Python package~\cite{scikit}, which we use to learn the models.}:
\[
\min \sum_{1 \leq d \leq k} \frac{1}{2 \cdot k} (v(f_1^d,\ldots,f_m^d) - v^d)^2 + \alpha \cdot \sum_{1 \leq i \leq m} |c_i|
\]
where $0 \leq \alpha \leq 1$ is a parameter for the effect of the regularization. Intuitively, the larger $\alpha$, the larger the penalty of having large coefficients $c_i$. Consequently, a larger value of $\alpha$ will drive more coefficients to zero. LASSO is a useful method when there are several correlated feature values, which could make an ordinary least squares model overfit on these values. We use LASSO regression because more zero coefficients implies we need to compute less feature values in order to evaluate the learned model, which has a positive effect on the optimization performance that we will discuss in Section~\ref{sec:translate}.

Currently, we provide the following features for these two regression models:
\begin{description}
\item[Feature 1: \texttt{sold}] For every item type $r$, the amount of $r$ items already auctioned.
\item[Feature 2: \texttt{remain}] For every item type $r$, the amount of $r$ items still to be auctioned.
 \item[Feature 3: \texttt{diff}] For every pair of item types $r$ and $r'$, the difference between the amount of $r$ and $r'$ items already auctioned.
\item[Feature 4: \texttt{sum}] For every item type $r$, the amount of value obtained from auctioning $r$ items, and the overall sum.
\item[Feature 5: \texttt{index}] For every item, the index at which it was auctioned.\end{description}

Other sequential features such as sliding windows and N-grams can of course be added to the model. However, since our white-box method computes these values inside an ILP solver, the only requirement is that they can be represented using an integer linear formulation. Although the \texttt{diff} feature can be determined using the first, we add it for convenience of learning a regression tree, which requires many nodes to represent such values. The influence of budget constraints is only directly modeled by the fourth feature: once the amount paid for $r_1$ items reaches a certain (to be learned) bound, we can expect all agents that only want $r_1$ items to be out of budget. Although, there only exists an indirect relation between the budget constraints and the first three features, including them can be beneficial and these are easier to compute. If used by the regression model, these features thus reduce the time needed to solve the auction design problem. For similar reasons, we add the last feature.
Below we give an example of how an ordering and its obtained values is transformed into a data set using these 5 types of features.
\begin{table}
\caption{The data set created from the past two auctions $\{r_1,r_2\}$ and $\{r_2,r_1\}$ in Example~\ref{ex:table}. \label{table:dataset}}
{\footnotesize
\begin{center}
\begin{tabular}{cccccccccc}
type & value & \texttt{sold}$r_1$  & \texttt{sold}$r_2$ & \texttt{diff}$r_1r_2$  & \texttt{sum}$r_1$ & \texttt{sum}$r_2$ & \texttt{sum}& \texttt{index} \\
$r_1$ & 4 & 0 & 0 & 0  & 0 & 0 & 0 &1\\
$r_2$ & 4 & 1 & 0 & 1 & 4 & 0 & 4& 2\\
$r_2$ & 5 & 0 & 0 & 0  & 0 & 0 & 0& 1\\
$r_1$ & 0& 0 & 1 & -1 & 0 & 5 & 5& 2\\
\end{tabular}
\end{center}
}
\end{table}
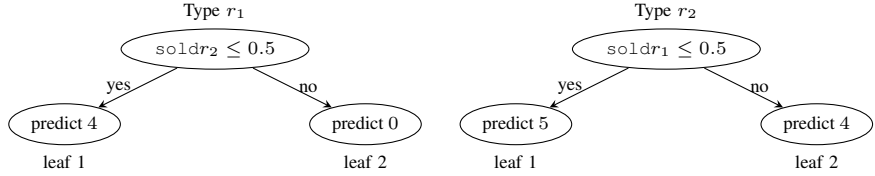
\begin{figure}[t]
\centering
{\scriptsize
\begin{tikzpicture}[>=stealth]
    \node [ellipse,draw,text centered] (root) at (0,0) {$\texttt{sold} r_2 \leq 0.5$};
    \node [ellipse,draw=none,text width=4em, text centered] at (0,0.5)  {Type $r_1$};
    \node [ellipse,draw,text centered] (left) at (-2,-1)  {predict $4$};
    \node [ellipse,draw=none,text width=4em, text centered] at (-2,-1.5)  {leaf 1};
    \node [ellipse,draw,text centered] (right) at (2,-1)  {predict $0$};
    \node [ellipse,draw=none,text width=4em, text centered] at (2,-1.5)  {leaf 2};
   \draw[->] (root) edge node [draw=none,left] {yes} (left);
   \draw[->] (root) edge node [draw=none,right] {no} (right);


    \node [ellipse,draw,text centered] (root) at (6,0) {$\texttt{sold} r_1 \leq 0.5$};
    \node [ellipse,draw=none,text width=4em, text centered] at (6,0.5)  {Type $r_2$};
    \node [ellipse,draw,text centered] (left) at (4,-1)  {predict $5$};
    \node [ellipse,draw=none,text width=4em, text centered] at (4,-1.5)  {leaf 1};
    \node [ellipse,draw,text centered] (right) at (8,-1)  {predict $4$};
    \node [ellipse,draw=none,text width=4em, text centered] at (8,-1.5)  {leaf 2};
   \draw[->] (root) edge node [draw=none,left] {yes} (left);
   \draw[->] (root) edge node [draw=none,right] {no} (right);

\end{tikzpicture}
}
\caption{\label{fig:trees} Two regression trees for the two item types from Example~\ref{ex:table}.}
\end{figure}

\begin{example}
\label{ex:table}
Consider the setting of Example~\ref{exp1}. 
Assume two auctions have been carried out. One sold $r_1$ first and then $r_2$. The other reversed. As shown in Example~\ref{exp1}, the first auction would obtain a revenue of 8, and the second auction would receive 5.  We compute feature values from these two auctions as depicted in Table~\ref{table:dataset}. Subsequently, we learn regression trees for both item types $r_1$ and $r_2$, as shown in Figure~\ref{fig:trees}.\footnote{The learned linear regression model is more straightforward. Hence we skip such an example here.}

After learning these regression trees, we can optimize the ordering for a new (unseen) multiset of items $\{r_1,r_2,r_2,r_2\}$ by trying all orderings and choosing one with maximum expected revenue: $r_1r_2r_2r_2$ gives $4+4+4+4=16$, $r_2r_1r_2r_2$ gives $5+0+4+4=13$, $r_2r_2r_1r_2$ gives $5 + 5 + 0 + 4 = 14$, and $r_2r_2r_2r_1$ returns $5+5+5+0=15$. Hence, the optimizer will choose to schedule the $r_1$ item before all $r_2$ items. In general, trying all possible orderings will be impossible: for a multiset of items $S = \{r_1,\ldots,r_n\}$ of $n$ types, there are a total of $\frac{|S|!}{\prod_{1 \leq i \leq n} |\{r_i | r_i \in S\}|!}$ unique orderings, which blows up very quickly.\QED

 \end{example}

\subsection{Modeling power and trade-off}\label{sec:power}

Our method of regression modeling allows the use of any regression method from machine learning for predicting unknown quantities in optimization, such as objective values and parameters. In addition, since the regression function $R$ uses other values in a (proposed) solution as input ($J,L$) instead of only external parameters/data, a learned regression model represents unknown relations between the different values in a solution. The model thus answers the question ``What is the value of X given that we do Y?'', as opposed to
``What is the value of X?'' that is answered by fitting only model parameters. Answering the first question allows for many more interesting possibilities. For instance, one could use stochastic optimization with fitted parameters to produce a schedule, use regression models to predict the effect of this schedule on the parameters, and use stochastic optimization again on the newly estimated parameters.
This way, one can use machine learning tools to plan further ahead. Using our white-box method, this can even be done using a single call to the optimization software.

This loop-back functionality provides a lot of power to our method, but also comes with a risk. Every time the predictive models are used there is a probability that the predictions are inaccurate. When using a loop-back, these possible inaccuracies influence all future predictions that depend on it. These future predictions are thus more inaccurate and the predicted overall objective value can potentially diverge from the true value. These cascading inaccuracies are an issue, however, the added modeling power makes up for it. We make use of it in the \texttt{sum} feature, which relates the predicted value to the predictions of earlier auctioned items. This feature is very important for predicting budget constraints, and consequently is often used by the regression models to produce predictions.

\section{White-box and black-box optimization for OOSA}\label{sec:translate}


Given the predictive models for the expected value per item, it is not straightforward to compute a good ordering as we already showed in Example~\ref{ex:table}. For a given ordering, we can predict the individual revenues of items using the regression model, and sum these up to obtain the revenue of the ordering. However, testing all possible orderings and choosing the one with the highest revenue will take a very long time. For instance, when we want to order 40 items of 8 types (the experimental setting in Section~\ref{sec:exp}) with 5 of every type, we will need to test $\frac{40!}{5!^{8}} \approx 1.9 \cdot 10^{31}$ possible unique orderings.


In~\ref{sec:proofs}, we also provide hardness results that demonstrate there is little hope (unless $\P=\NP$) of finding an efficient (polynomial time) algorithm that gives the optimal ordering for any regression tree or linear regression predictor. In general, we cannot do better than performing a guided search through the space formed by all possible orderings. We present two such search-based optimization methods: (1) a novel ``white-box'' optimization (i.e., ILP model), and (2) a ``black-box'' heuristic (i.e., best-first search).

\subsection{White-box optimization: an ILP model}

Given regression tree and linear regression models for the expected value per item type, we automatically formulate the problem of finding an optimal ordering as a mixed integer linear program (MIP). We discuss the encoding of a sequential auction, feature values, objective function, and translating the learned models (regression tree and linear regression respectively) below.

\paragraph{Ordering an auction}
Given a multiset $I$ of $n$ items, each from a set of possible types $R$, we use the following free variables to encode any possible ordering of $I$: {$x_{i,r} \in \{0,1\}$}. Item $i$ is of type $r$ if and only if $x_{i,r} = 1$.
Thus, if $x_{3,A}$ is equal to 1, it means that the third auctioned item is of type $A$. We require that at every index $i$ at most one item type is auctioned, and that the total number of auctioned items of type $r$ is equal to the number $n_r$ of type $r$ items in $I$.

\begin{center}
\begin{tabular}{rlll}
$\sum_{r \in R} x_{i,r}$  & = & $1$ & for all $1 \leq i \leq n$\\
$\sum_{1 \leq i \leq n} x_{i,r}$ & =& $n_{r}$ & for all $r \in R$\\
\end{tabular}
\end{center}

Any assignment of ones and zeros to the $x$ variables that satisfies these two types of constraints corresponds to a valid ordering of the items in $I$. The value of such an ordering is determined by the learned regression models.

\paragraph{Translating feature values}
In order to compute the prediction of a regression model, we not only need to translate the models into ILP constraints, but also the values of the features used by these models. Feature 1, 2, 3, and 5 can be computed using linear functions from the $x$ variables:

\begin{center}
\begin{tabular}{rlll}
$\texttt{sold}_{i,r}$ & = & $\sum_{j<i}x_{j,r}$ & for all $1 \leq i \leq n, r \in R$\tabularnewline
$\texttt{diff}_{i,r,r'}$ & = & $\texttt{sold}_{i,r} - \texttt{sold}_{i,r'}$ & for all $1 \leq i \leq n, r,r' \in R, r \ne r'$\tabularnewline
$\texttt{remain}_{i,r}$ & = & $\sum_{j>i}x_{j,r}$ & for all $1 \leq i \leq n, r \in R$\tabularnewline
$\texttt{index}_{i}$ & = & $i$ & for all $1 \leq i \leq n$\tabularnewline
\tabularnewline
\end{tabular}
\end{center}

For the fourth type of feature, we use an additional variable  $v_{j,r}$, which encodes the expected value of the item auctioned at index $j$ of type $r$. If the item at index $j$ is not of type $r$, $v_{j,r}$ is equal to $0$. Since the $v$ variables require the predictions (expected values) as input, we provide their definition after defining the regression models.

\begin{center}
\begin{tabular}{rlll}
$\texttt{sum}_{i,r}$ &  =  & $\sum_{1 \leq j \leq i} v_{j,r}$ &  for all $1 \leq i \leq n$, $r \in R$
\end{tabular}
\end{center}

To aid the ILP solver, we also pre-compute the minimum $m_{f,i}$ and maximum $M_{f,i}$ obtainable values of every feature $f$ at every index $i$ and provide these as bounds to the solver.

\paragraph{Constructing the objective function}
We aim to maximize the expected values $v_{i,r}$: 

\begin{center}
\begin{tabular}{c}
$\max \sum_{1 \leq i \leq n} \sum_{r \in R} v_{i,r}$
\end{tabular}
\end{center}

These $v_{i,r}$ are the predictions of the regression functions, defined later. 
Although it is also possible to compute both the objective function and the $\texttt{sum}$ values as very large sums over the $x$ and model variables (described next), specifying parts of these sums as intermediate continuous $v$ variables significantly reduces both the encoding size and the computation time.
%


Finally, we discuss how to encode the learned regression tree and the linear regression model as the constraints in the ILP model.

\paragraph{Encoding regression trees}

We translate the regression tree models into ILP using carefully constructed linear functions. Our encoding only requires one new set of $\{0,1\}$ variables $z_{i,l,r}$, representing whether a leaf node $l$ is reached for item type $r$ at index $i$. The internal (decision) nodes of the trees can be represented implicitly by the constraints on these new $z$ variables. Intuitively, we encode that a $z$ variable has to be false when the binary test of any of its parent nodes fails. By additionally requiring that exactly one $z$ variable is true at every index, we fully encode the learned regression trees.


Let $D_r$ be the set of all decision nodes in the regression tree for type $r$. Every decision node in $D_r$ contains a boolean constraint $f \leq c$, which is true if and only if feature $f$ has a value less than or equal to a constant $c$. A key insight of our encoding is that every such boolean constraint directly influences the value of several $z$ variables: if it is true (at index $i$), then all $z$ variables representing leafs in the right subtree are false; if it is false, then all that represent leafs in the left subtree are false. In this way, we require only two constraints per boolean constraint in order to represent all possible paths to leaf nodes.

\begin{center}
\begin{tabular}{cccc}
$\texttt{fv}_{f,i} + (M_{f,i} - c) \cdot \sum_{l \in L} z_{i,l,r} \leq M_{f,i}$ & for all $1 \leq i \leq n, r \in R, (f \leq c) \in D_r$\\
$\texttt{fv}_{f,i} + (m_{f,i} - c) \cdot \sum_{l \in L'} z_{i,l,r} \geq m_{f,i}$ & for all $1 \leq i \leq n, r \in R, (f \leq c) \in D_r$
\end{tabular}
\end{center}

\noindent where $\texttt{fv}_{f,i}$ is a calculation of feature value $f$ for index $i$, $L$ and $L'$ are the leaf nodes in the left and right subtrees of the decision node with constraint $(f \leq c)$ in the regression tree for type $r$, and $M_{f,i}$ and $m_{f,i}$ are the  maximum and minimum values of feature $f$ at index $i$. For the feature calculation we simply replace $\texttt{fv}_{f,i}$ with the right-hand sides of the corresponding feature definitions.\footnote{We ignore the possibility that a feature $f$ is equal to $c$ because, since features have a limited precision, we can always replace the constants in a decision node with one that cannot be obtained by $f$, without changing its behavior.}

The above constraints ensure that when $z_{i,l,r}$ obtains a value of $1$, all of the binary test in the parent nodes on the path to $l$ in the tree for type $r$ return true at index $i$. By construction of the regression trees, this ensures that at most one $z$ variable is true for every type $r$ and index $i$. We require however that exactly one $z$ variable is true at every index.\footnote{Counterintuitively, it can occur that the objective function (discussed below) is maximized when every $z$ variable is false at an index $i$. If a small $\texttt{sum}$ is needed to reach a high revenue prediction, it can be beneficial to auction but not sell an item.} This $z$ has to be of the same type as the item sold at index $i$, denoted by the $x$ variables:

\begin{center}
\begin{tabular}{rlll}
$\sum_{l} z_{i,l,r}$ & $=$ & $x_{i,r}$ & for all $1 \leq i \leq n, r \in R$\\
\end{tabular}
\end{center}

This completes our encoding of the regression trees. The predictions of the trees at every index $i$ are given by the $z$ variable that is true for index $i$. We multiply this $z$ variable with the constant prediction in the leaf node it represents to obtain the prediction, and store it in the $v$ variables used to compute the $\texttt{sum}$ feature values.

\begin{center}
\begin{tabular}{rlll}
$v_{i,r} = \sum_{l \in L_r} c_{l,r} \cdot z_{i,l,r}$, & for all $1 \leq i \leq n, r \in R$ \\
\end{tabular}
\end{center}

\noindent where $c_{l,r}$ is the constant prediction of leaf $l$ in the tree for type $r$.

We now discuss how to encode a linear regression model.

\paragraph{Encoding linear regression model}

Due to its linear nature, implementing linear regression in ILP is very straightforward. We can directly compute the value of the $v$ variables using the linear predictor function:

\begin{center}
\begin{tabular}{rlll}
$v_{i,r} = \sum_{f \in \texttt{Feat}} c_{f,r} \cdot \texttt{fv}_{f,i}$, & for all $1 \leq i \leq n, r \in R$ \\
\end{tabular}
\end{center}

\noindent where $\texttt{Feat}$ is the set of all features, $\texttt{fv}_{f,i}$ is feature $f$'s values at index $i$, and $c_{f,r}$ is the constant coefficient for feature $f$ in the regression function for type $r$. The only somewhat difficult part is that at every index, the used regression function can change depending on the auctioned item type $r$. We implemented this choice using indicator functions in CPLEX.\footnote{Many other solvers have similar constructions. If not, these constraints can be implemented using a `big-M' formulation, similar to the one we use to determine the value of the $z$ variables in the regression tree formulation.} This changes the above formulation as follows:

\begin{center}
\begin{tabular}{rrrlll}
$x_{i,r} = 1$ & $\rightarrow$ & $v_{i,r} = \sum_{f \in \texttt{Feat}} c_{f,r} \cdot \texttt{feat}_{f,i}$, & for all $1 \leq i \leq n, r \in R$ \\
$x_{i,r} = 0$ & $\rightarrow$ & $v_{i,r} = 0$, & for all $1 \leq i \leq n, r \in R$ \\
\end{tabular}
\end{center}

It states that if $x_{i,r}$ is true, then the values of $v_{i,r}$ is determined using the regression function. Otherwise, its value is 0. These are the only constraints needed to fully implement a linear regression function. When using LASSO regularization, some coefficients will receive the value 0. These are removed from the encoding, making the models smaller and easier to evaluate in the solver.

Now the two ILP models are complete and ready to solve the OOSA problem. We give the following example to illustrate how the formulation of ILP works given learned regression trees.

\begin{example}
Given the learned trees in Example~\ref{ex:table}, suppose we are asked to order a new multiset of items $\{r_1,r_2,r_2\}$. We translate this new set, together with the learned trees into the following integer linear program with the following $\{0,1\}$ decision variables (for all $1 \leq i \leq 3$): $x_{i,r_1},x_{i,r_2}, z_{i,1,r_1},z_{i,2,r_1},z_{i,1,r_2}$:

\begin{center}
\begin{tabular}{c}
$\max \sum_{1 \leq i \leq 3} v_{i,r_1} + v_{i,r_2}$
\end{tabular}\\
\begin{tabular}{rcl}
where $v_{i,r_1}$ & $=$ & $4 z_{i,1,r_1}$ \\
and $v_{i,r_2}$ & $=$ & $5 z_{i,1,r_2} + 4 z_{i,2,r_2}$
\end{tabular}
\end{center}

\noindent subject to (for all $1 \leq i \leq 3$)
\begin{center}
\begin{tabular}{ccccccccccccccc}
$x_{1,r_1} + x_{2,r_1} + x_{3,r_1}$ & $=$ & $1$ \\
$x_{1,r_2} + x_{2,r_2} + x_{3,r_2}$ & $=$ & $2$ \\
$x_{i,r_1} + x_{i,r_2}$ & $=$ & $1$
\end{tabular}
\end{center}
This denotes that exactly one $x$ variable is true at every index $i$, $2$ $x$ variables are true for item type $r_2$, and $1$ for type $r_1$. This encodes all possible orderings. From this we compute the feature values (for all $1 \leq i \leq 3$):
\begin{center}
\begin{tabular}{ccccccccccccccc}
$\texttt{sold}_{i,r_1}$ & $=$ & $x_{1,r_1} + \ldots + x_{i-1,r_1}$\\
$\texttt{sold}_{i,r_2}$ & $=$ & $x_{1,r_2} + \ldots + x_{i-1,r_2}$\\
\end{tabular}
\end{center}
that are used in the constraints denoting the Boolean tests in the internal nodes:
\begin{center}
\begin{tabular}{ccccccccccccccccc}
$\texttt{sold}_{i,r_2}$ & $+$ & $(100-0.5)z_{i,1,r_1}$ & $\leq$ & $100 $ \\
$\texttt{sold}_{i,r_2}$ & $+$ & $(-0.5)z_{i,2,r_1}$ & $\geq$ & $0$\\
$\texttt{sold}_{i,r_1}$ & $+$ & $(100-0.5)z_{i,1,r_2}$ & $\leq$ & $100$ \\
$\texttt{sold}_{i,r_1}$ & $+$ & $(-0.5)z_{i,2,r_2}$ & $\geq$ & $0$\\
\end{tabular}
\end{center}
where $M_{\texttt{sold},i}=100$ and $m_{\texttt{sold},i}=0$. The first two constraints encode that if $z_{i,1,r_1} = 1$, $\texttt{sold}_{i,r_2} \leq 0.5$; and if $z_{i,2,r_1} = 1$, $\texttt{sold}_{i,r_2} \geq 0.5$. Thus, if a $z$ variable is true for a leaf, then all the Boolean tests of internal nodes on the path from the root to that leaf have to succeed. The last two constraints are the same constraints for the second tree. At last, we require that exactly one $z$ variable is true at every index:
\begin{center}
\begin{tabular}{ccccccccccccccccc}
$z_{i,1,r_1} + z_{i,2,r_1}$ & $=$ & $x_{i,r_1}$, \\ $z_{i,1,r_2} + z_{i,2,r_2}$ & $=$ & $x_{i,r_2}$.
\end{tabular}
\end{center}

\noindent A satisfying assignment to the $x$ variables is $x_{1,r_1},x_{2,r_2},x_{3,r_2}$ set to 1, the rest to 0, corresponding to the ordering $r_1r_2r_2$. Since $\texttt{sold}_{1,r_2} = 0$, this leads to $99.5z_{1,1,r_1}\leq 100$ and $-0.5 z_{1,2,r_1}\geq 0$, forcing $z_{1,2,r_1}=0$. Since $z_{1,1,r_1} + z_{1,2,r_1} = x_{1,r_1} = 1$, this implies $z_{1,1,r_1}=1$. For the next index, we have $\texttt{sold}_{2,r_1} = 1$, giving $99.5z_{2,1,r_2}\leq 99$ and $-0.5 z_{2,2,r_2}\geq -1$, and therefore forcing $z_{2,1,r_2}=0$ and $z_{2,2,r_2}=1$. Similarly, we obtain $z_{3,2,r_2}=1$. This results in an objective value of $4 + 4 + 4 = 12$.

\QED
\end{example}

\subsection{A black-box heuristic: best-first search algorithm}

We also provide a black-box heuristic for solving the ordering problem, see also~\cite{verwer2012revenue}. The traditional method to overcome the computational blowup caused by sequential decision making is to use a dynamic programming method. Although this lessens the computational load by combining the different paths that lead to the same sets of auctioned items, the search space is still too large and waiting for a solution will take too long. Instead, we therefore employ a best-first search strategy that can be terminated anytime in order to return the best found solution so far. We show how this best-first search strategy works in Algorithm~\ref{alg:ordering}.

\begin{algorithm}[t]
\begin{small}
\caption{Black-box heuristic for solving OOSA: best-first search  \label{alg:ordering}}
\begin{algorithmic}
\REQUIRE A set of items $S$, historical data on orderings and their values $D$, a maximum number of iterations $m$
\ENSURE Returned is a good (high expected value) ordering
\STATE Transform $D$ into a data set
\FOR{ every item type $r$ }
\STATE Learn a regression model from $D$ for predicting the value of item type $r$
\ENDFOR
\STATE Initialize a hashtable $H$ and a priority queue $Q$
\STATE Add the empty data row to $Q$
\WHILE{ $Q$ is not empty and the size of $H$ is less than $m$ }
\STATE Pop the row of features $F$ with highest value $v$ from $Q$
\IF{ $H$ does not contain $F$ with a value $\geq v$ }
\STATE Add $F$ with value $v$ to $H$
\STATE Let $L$ be the set of remaining items in $F$
\FOR{ every item type $r$ of items in $L$ }
\STATE Let $i_k$ be an item of Type $r$ in $L$
\STATE Let $L'$ be a random ordering of $L - i_k$
\STATE Use the models to evaluate the value $v'$ of auctioning the ordering $i_k L'$ after $F$
\STATE Create new features $F'$ for auctioning $i_k$ after $F$
\STATE Add $F'$ to $Q$ with value $v+v'$
\ENDFOR
\ENDIF
\ENDWHILE
\RETURN The highest evaluated ordering
\end{algorithmic}
\end{small}
\end{algorithm}

The algorithm uses a hashtable and a priority queue. The hashtable is used to exclude the possibility of visiting the same nodes twice if the obtained value is less than before (just like a dynamic programming method). These dynamic programming cuts are sensible but lose optimality as on rare occasions it could be better to sell earlier items for less, leaving more budget for the remaining ones. The priority queue provides promising candidate nodes for the best-first strategy. By computing random orderings of the remaining items, the learned models can evaluate complete orderings of all items. The best one found is stored and returned if the algorithm is terminated. Unfortunately, this does not result in an admissible heuristic for an A* search procedure. Hence, even if the algorithm pops a solution from the queue, this is not necessarily optimal. In our experience, using random orderings of the remaining items in this heuristic provides a good spread over the search space. Although some nodes can be `unlucky' and obtain a bad ordering of the remaining items, there are always multiple ways to reach nodes in the search space and it is very unlikely that all possibilities will be `unlucky'.

\subsection{White-box or black-box optimization?}

The main difference between the two abovementioned approaches (see Figure~\ref{fig:white-black}) is that the white-box method specifies the predictors entirely as constraints, which can be used to infer bounds on the predictions and cut the search space. The black-box method instead uses the predictors as oracles and is ignorant as to how the predictions are made, which are naturally more efficient to compute but cannot be used to infer search space cuts, i.e., to deduce that one ordering is better than another without testing both of them. Another key difference is that the white-box method results in a single optimization model that can be run in any modern solver, while the black-box method requires the use of executable code to produce the predictions. In the black-box setting, it is therefore much harder to use the powerful solving methods available in dedicated solvers for problems such as integer linear optimization (ILP), satisfiability (SAT), or constraint programming (CP). 
Instead, general search methods can be used such as best-first search, beam-search, meta-heuristics, genetic algorithms, 
etc.

Both black-box and white-box approaches have their advantages. The main advantage of black-box is that its performance is for a large part independent of the complexity of the used regression model. In contrast, by explicitly modeling the regression model as constraints in a white-box, more complex regression models lead to many more constraints, which can dramatically increase in the time needed to solve it. Another advantage of black-box optimization is that it is easy to include additional cuts such as the dynamic programming cuts discussed above. Such cuts can be added as constraints in an LP formulation, but this can lead to a blow-up in runtime.

The main benefit of using the white-box approach is the use of modern exact solvers instead of a heuristic search. These solvers use (amongst others) advanced branch-and-bound methods to cut the search space, compute and optimize a dual solution, and can prove optimality without testing every possible solution. Since
Our white-box constructions can also be easily integrated into existing (I)LP formulations that have been used in a wide range of applications in for instance Operations Research. In this way, one can combine the vast amount of expert knowledge available in these applications with the knowledge in the readily available data. Our white-box method is the first we know of that makes the results of machine learning directly available to mathematical modeling in this fashion.

The most important downside of white-box is that an evaluation of translated models likely requires more time than running the code as a black-box, especially when the models or features are somewhat complex. In our opinion, however, the advantages of white-box optimization largely outweigh those of black-box optimization and make it a very interesting topic for research in machine learning and optimization. In the next section, we compare these two modeling approaches and investigate this trade-off by applying them to OOSA.




%

\section{Experiments}\label{sec:exp}




Designing an optimal ordering for sequential auctions is difficult with heterogeneous bidders, as they may value items differently, have different budget constraints, and moreover, bid rationally or irrationally with various bidding strategies. To evaluate the performance of the proposed optimization methods, ideally, we should collect real auction data, build the optimization models, run real-world auctions with real bidders using different ordering of items produced by different methods, and then compare the resulting revenues. Since this evaluation method is not feasible for us, nor is it the main purpose of this paper, we opted for a widely accepted evaluation approach in research community, that is, we created an auction simulator which simulates auctions with agents. We used this simulator to generate auction data sets, and to evaluate the proposed method. An overview of this process is given in Figure~\ref{fig:experimentframework}.


\begin{figure}[thb]
\begin{center}
\includegraphics[width=\columnwidth]{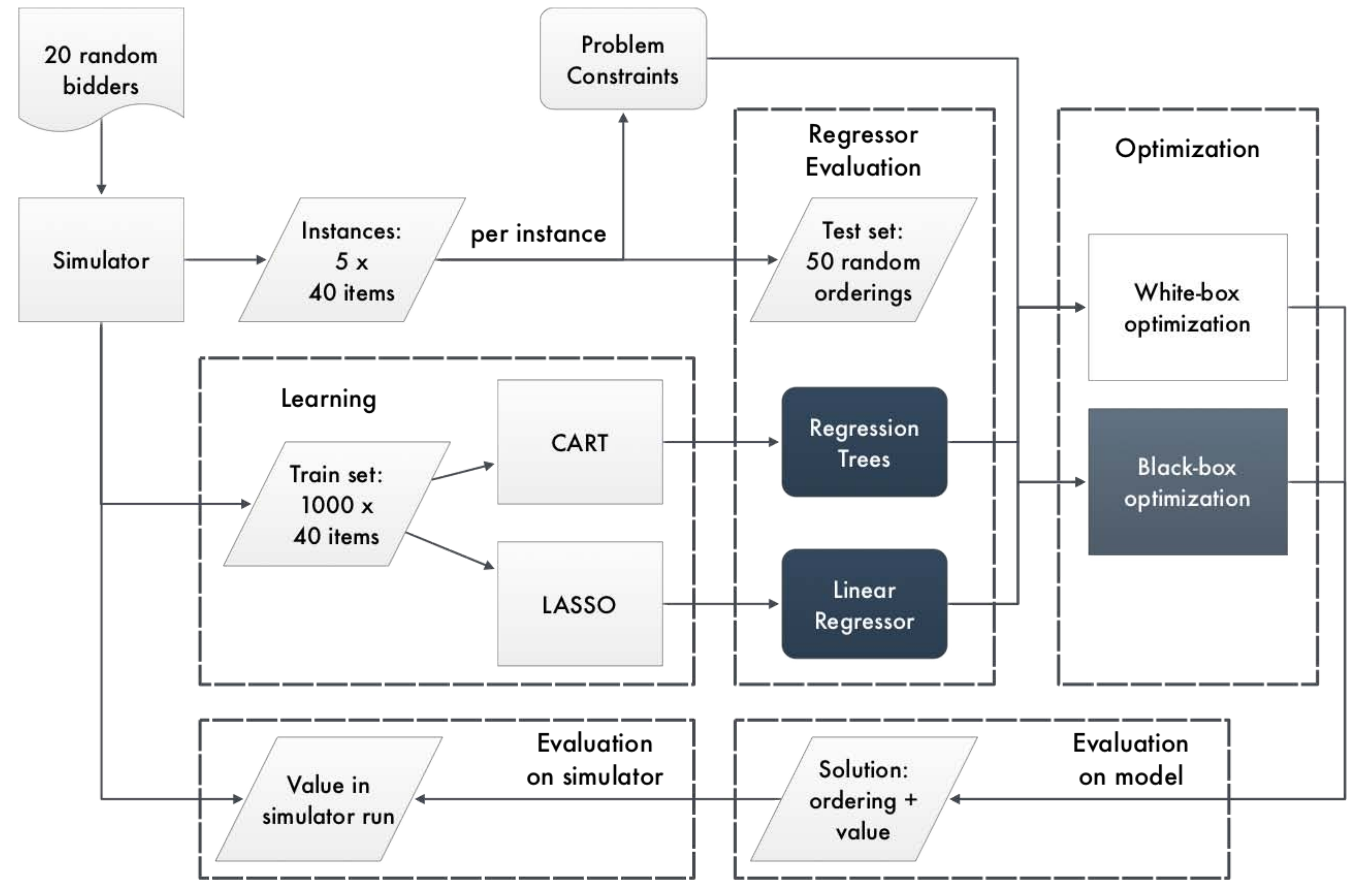}
\end{center}
\caption{Our framework for evaluating optimization using learned models for OOSA. There is a simulator that is used in two ways: to generate historic data and to evaluate the OOSA solutions. A train set and a few unseen problem instances are generated. The train set is used to learn the regression models. Random orderings of the problem instances are used to test them. The instances together with the learned models are provided as input to the ILP-based white-box optimization and the best-first black-box optimization. The resulting orderings are evaluated using the learned models, and using simulator runs.\label{fig:experimentframework}}
\end{figure}

\subsection{The simulator}

\paragraph{Simulating auctions} We simulate several sequential auction settings with the simulator: (i) first price auctions where agents bid their reservation prices (agents' reservation prices are lower than their valuations), as in~\cite{Subramaniam09optimal}; (ii) second price sealed bid auctions (i.e. Vickrey auctions~\cite{vickrey61}), where agents bid truthfully on each item based on their valuation functions, as in \cite{Pinker10} \footnote{Note that in sequential Vickrey auctions with budget constrained agents, truth-telling is not an equilibrium bidding strategy (see~\cite{Vetsikas13}). To the best of our knowledge, how to compute the equilibrium bidding strategy in realistic auction settings is an open problem.};  (iii) Vickrey auctions where agents bid smartly, i.e., they compare the utility obtained at the end of the auction when buying and not buying the item and place a bid based on the difference (see Section~\ref{sec:exp23} for more details). On the last auctioned item, they bid truthfully\footnote{Vickrey~\cite{vickrey61} showed it is a weakly dominant strategy for bidders to bid their true values for the last auctioned item.}.
Given all bids on an item, the highest bid wins. If multiple agents have the same true value, one of these is selected as winner uniformly at random. 
With these different auction settings, we intend to show our method is robust to the actual auction format and bidding strategies.

\paragraph{Item types}
We use a given set of 8 items to initialize the auction simulator. Every type gets assigned an average value of $25 + (5 \cdot i)$, for $1 \leq i \leq 8$, and a reserve price of $\frac{1}{2}(25 + (5 \cdot i))$. Every type is assigned popularity and sparsity values, drawn uniformly from $[2,10]$. The popularity value measures the degree of desirability of the item type by the agents. The sparsity is a measure for the frequency that an item type is sold.
In every auction, $40$ items are generated using a roulette wheel drawing scheme using the sparsity values.

\paragraph{Bidder agents}
The simulator starts with 20 randomly generated bidding agents. Every such agent gets assigned a budget between 50 and 250 uniformly at random. They may desire 1 to 5 of the 8 item types, where popular types have a higher probability of being selected, drawn using a roulette wheel selection on the item types' popularity values. For bidding in first-price auctions, every desired item type assigned to an agent is also given a reservation price of $u \cdot v$, where $v$ is the average value of that type, and $u$ is a uniform random value between 0.5 and 2.0. The following is an example of five agents generated for the experiments:
\begin{center}
{\footnotesize
\begin{tabular}{c|cccccccc}
budget & $v(r_1)$ & $v(r_2)$ & $v(r_3)$ & $v(r_4)$ & $v(r_5)$ & $v(r_6)$ & $v(r_7)$ & $v(r_8)$ \\ \hline
123 & & & & & 59 & & & \\
141 & & & & & 78 & 81 & 41 & 115 \\
130 & & 32 & 21 & & 80 & 25 & & 36 \\
126 & & 32 & 53 & & 28 & & 36 & \\
182 & & & & & 51 & 93 & & \\
\end{tabular}
}
\end{center}

\paragraph{Training}
For a given set of 20 random bidders, the simulator generates 1000 historical auctions. The 40 items in these auctions are generated using the above scheme, and ordered randomly. These items are run in the simulator, where the agents use the above-mentioned bidding strategies to decide what value to bid, in order to determine the winners and the item selling prices. The total selling price of all items in one sequential auction is the collected revenue. For the 20 generated bidders, we first experiment the effect of different item orderings on the collected revenue by trying 100 random orderings and comparing the smallest, median, and largest collected revenue. If the difference between the largest and smallest is less than one tenth of the median revenue, 20 new bidders are generated. This process is repeated until we find a set of agents that passes this check, which typically occurs after a few iterations. By performing this check, we remove irrelevant problem instances. For the 20 agents that pass this check, we generate 1000 auctions of 40 items and simulate these auctions together with the agents. The resulting sequences of item-price pairs are then transformed to the features discussed in Section~\ref{sec:representing}, and the resulting data set is used to train the regression trees and linear regressors.

\paragraph{Testing}
For the same set of 20 bidders, we generate 5 sets of 40 items, which are used for testing. First, the regressors are tested by comparing their predictions with the revenues generated by the simulator on 50 random orderings of each of these item sets. Second, we translate each of the item sets into constraints for both the black-box and white-box optimization solvers. The best ordering found by these solvers are compared based on their values on the regression model, and in the simulator.


\subsection{Experimental setup}
In each experiment, we generate agents and items as described above.
We use an implementation of regression trees and LASSO from the scikit-learn machine learning module \cite{scikit} in Python to learn (and evaluate) the regressors. We learn trees of different depths of 3, 5, 8 (we call them \texttt{tree3}, \texttt{tree5}, \texttt{tree8}), and we set the minimum number of samples required to split an internal node to 10. The LASSO regressor is run with 3 different values for $\alpha$: 1.0, 0.1, and 0.000001 (\texttt{lasso1}, \texttt{lasso2}, \texttt{lasso3}). The resulting trees and linear models then get translated to ILP, which in turn gets solved by an ILP-solver (CPLEX~\cite{cplex}). In addition, we provide the solver with an initial solution (the best of 1000 random orderings) in order to start the search, and set the focus of the solver to finding integer feasible solutions. 
We set a time limit on the ILP solver of 15 minutes for each instance using a single thread on an Intel core i-5 with 8GB RAM and record the best ordering of items that the ILP solver has obtained. The last minute is spent on solution polishing (a local search procedure in CPLEX). We apply our best-first search method on the same problem instances with the same running time limit.

\paragraph{Evaluations}
There are two levels of evaluations involved in our problem. Firstly, we determine the quality of the learned regressors, as they influence the quality of the solution after optimization. For this, we tested different regression trees with different maximum depths and linear regression models with different parameters.

Next, the optimization methods are evaluated in terms of the quality of the predicted revenue. The optimization methods that we compare include the proposed white-box ILP model which finds a solution based on the abovementioned 6 regression models, the proposed black-box best-first search which evaluates a solution based on the 6 regression models, and in addition, two other simple ordering methods: (i) auctioning the most valuable item first (i.e., \texttt{mvf}), as suggested in~\cite{Subramaniam09optimal}; and (ii) a random ordering strategy (i.e., \texttt{mean5000}), as seen in many real-world auctions for the purpose of fairness.

It is not feasible for us to compute the best solution given the problem size. Thus, we obtain a lower bound on the optimal solution as follows: given a set of items, we generate 5000 random orderings, and we use the \emph{true} model (i.e., the simulator) to evaluate them and pick the one returning the highest revenue.
We use the mean value of these 5000 random orderings as the output of the random ordering strategy.

We evaluate the 15 ordering methods in two ways:
\begin{itemize}
\item Model evaluation: we use the learned regression models to evaluate the solutions returned by the ordering methods to compute the predicted revenues. 
\item Actual evaluation: we run auctions with the solutions (i.e., orderings) returned by the ordering methods in our simulator to obtain the corresponding revenues. Note that such an evaluation is possible only when a simulator is available.
\end{itemize}

There are in total three sets of experiments presented in this paper.
\begin{enumerate}
\item In the first set of experiments, we simulate first price auctions where agents bid and pay (if winning) their reservation prices. 

\item In the second set of experiments, the simulator runs Vickrey auctions where agents bid their true values on each item and the winner pays the second highest bid, or the reserve price if (s)he is the only bidder.

\item In the third set of experiments, the simulator runs Vickrey auctions and agents bid smartly based on their expected utility at the end of the auction (see~\ref{sec:exp23}).

\end{enumerate}

\begin{figure}[thb]
\begin{center}
\includegraphics[width=\columnwidth]{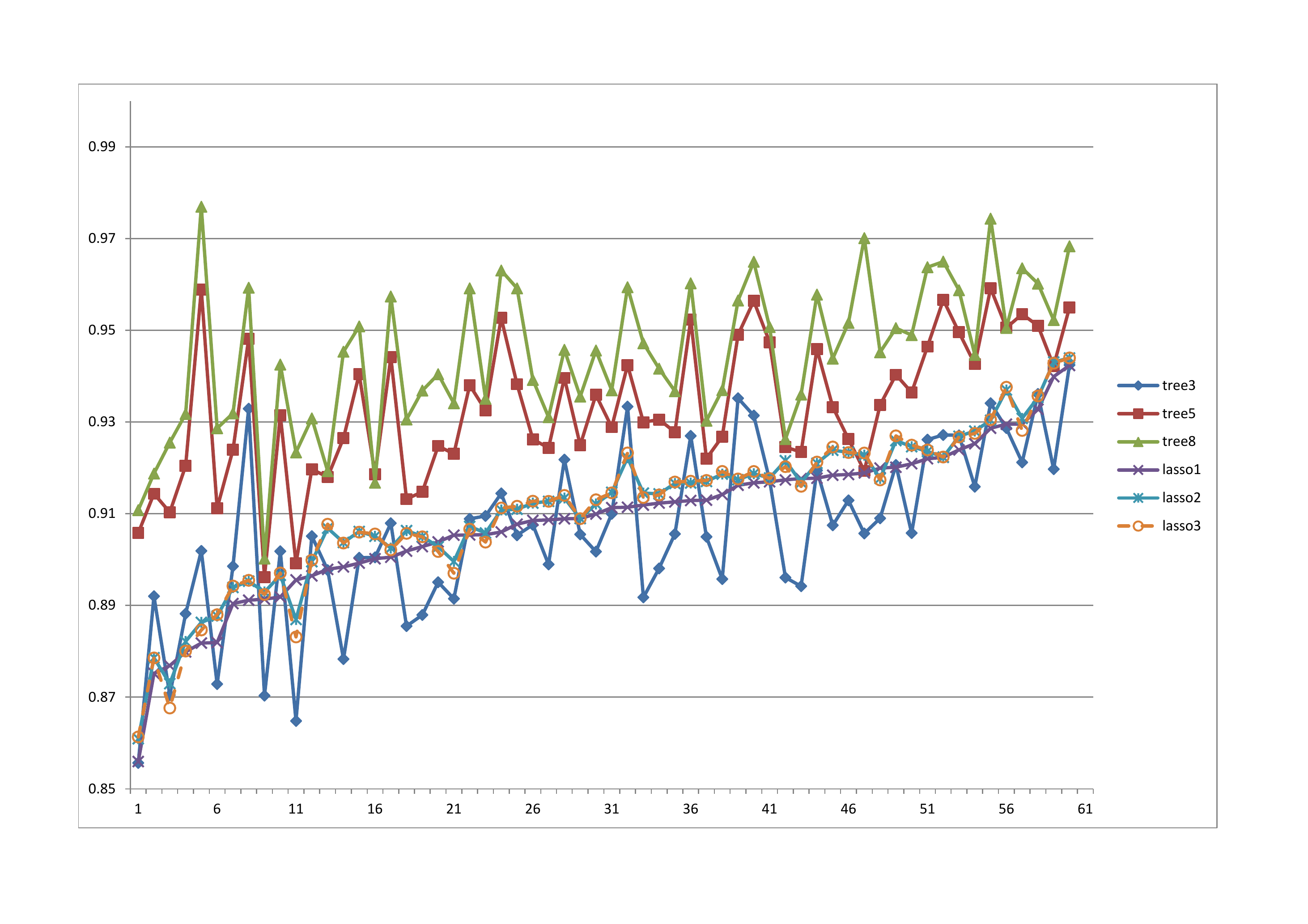}
\end{center}
\caption{$R^2$ scores for different learning models for 60 different sets of agents. The observed values are those returned by the simulator. Each score is computed from 10000 values. The results are presented in ascending order of the scores of \texttt{lasso1}. \label{fig:r2}}
\end{figure}

\subsection{Experiment 1}

\begin{table*}[tbh]
\begin{center}
\caption{The frequencies of wins of 300 runs for each method against others (row method vs column method), evaluated in the simulator.\label{tab:wins}}
\resizebox{\columnwidth}{!}{
\begin{tabular}{l|ccccccccccccccccc}
   & \begin{sideways}    tree3lp \end{sideways}& \begin{sideways}tree3bf \end{sideways}&\begin{sideways}tree5lp\end{sideways} &\begin{sideways}tree5bf\end{sideways}& \begin{sideways}tree8lp\end{sideways} &\begin{sideways}tree8bf\end{sideways} &\begin{sideways}lasso1lp\end{sideways}&\begin{sideways} lasso1bf\end{sideways}& \begin{sideways}lasso2lp\end{sideways}&\begin{sideways} lasso2bf\end{sideways}&\begin{sideways} lasso3lp\end{sideways} &\begin{sideways}lasso3bf\end{sideways}&  \begin{sideways} mvf \end{sideways} &\begin{sideways}best5000 \end{sideways}& \begin{sideways}mean5000 \end{sideways}&\begin{sideways}total wins\end{sideways} \\\hline
 tree3lp  &  0 & 171 & 105 & 152 & 107 & 125 & 102&  130 &  71 &  127 &   82 &  119  &   231 &   17 &  240 & 2068\\
 tree3bf & 125  &  0  & 83 & 129 &  81 & 105 &  71 & 116 &  61  &  94 &   67 &   86  &  226 &   15 &  230 & 1770\\
 tree5lp & 192 & 213  &  0 & 197 & 149 & 168 & 144 & 169 & 111  & 156 &  128 &  164  &   249 &   24 &  273 & 2632\\
 tree5bf  &142 & 163  & 97 &   0 & 105 & 122 &  94 & 134 &  74  & 118 &   90 &  115  &    238 &   16 &  241 & 2037\\
 tree8lp  &188 & 217 & 142 & 193 &   0 & 178 & 137 & 181 & 113  & 160 &  128 &  162  &    260 &   23 &  287 & 2668\\
 tree8bf  &170 & 192 & 130 & 168 & 121 &   0 & 112 & 157 &  91  & 133  & 106 &  138  &    251 &   28 &  269 & 2361\\
 lasso1lp & 193 & 226 & 150 & 202 & 160 & 183 &   0 & 185&  113 &  178 &  136 &  170 &     263 &    38 &  269 & 2756\\
 lasso1bf & 168 & 181 & 127 & 158 & 113 & 139 & 100 &   0 &  88 &  135 &   91 &  131 &     233 &   18  & 251 & 2219\\
 lasso2lp  &224 & 236 & 184 & 225 & 182 & 205 & 174 & 204  &  0  & 195 &  159  & 199 &     265 &   40 &  286 & 3073\\
lasso2bf  &171  & 200 & 141 & 177 & 134 & 163 & 115 & 155  & 94  &   0 &  107  & 124 &     243 &   17 &  273 & 2407\\
lasso3lp  &214 & 231 & 169 & 204  &167  &188  &155 & 201 & 126  & 184  &   0  & 189  &    257  &  36 &  281 & 2899\\
lasso3bf & 176 & 210 & 131 & 182 & 134 & 161 & 124 & 154 &  89  & 156  &  96  &   0  &    245  &  15 &  276 & 2439\\
mvf  & 68  & 73 &  51  & 61 &  38  & 48 &  36 &  65  & 34  &  55  &  41  &  54  &     0  &   9 &  123  & 982\\
best5000 & 282 & 284 & 268 & 284 & 277 & 271 & 257 & 280  &255  & 278 &  252 &  280 &     291  &   0 &  299  &4158\\
mean5000 &  57 &  65  & 27  & 58  & 13 &  30  & 31  & 48   &12  &  26 &   18 &   24  &   174  &   1  &   0   &864\\
\end{tabular}
}
\end{center}
\end{table*}

We generate 60 sets of agents. For each set of agents we generate and run new sets of items 5 times. This results in 300 models for each learning method.

We first report the performance of the learned trees and linear models in terms of prediction accuracy. Given the same set of items as used during learning, we randomly generate 50 permutations of these items as orderings and compute the predicted valued of these orderings using the learned models. Thus, for every set of agents, there are $50 \times 5 \times 40 = 10000$ bids to predict. These predictions are compared with the evaluated values of the orderings by the simulator. We report the coefficients of determination, or $R^2$ scores, in Figure~\ref{fig:r2}. This coefficient is a standard measure for comparing regression models and is defined as:
\[
R^2 = 1 - \frac{\sum_{1 \leq d \leq k} (v^d - v(f_1^d,\ldots,f_m^d))^2}{\sum_{1 \leq d \leq k} (v^d - \texttt{mean}(v))^2},
\]
\noindent where $k$ is the number of samples, $v^d$ is the $d$th data value (simulator evaluation), $v(f_1^d,\ldots,f_m^d)$ the predicted value, and $\texttt{mean}(v)$ the mean of all data values. A large value (close to 1.0) means the regressor is an almost perfect predictor, and smaller values indicate worse performance. Figure~\ref{fig:r2} shows that all regression models lead to good prediction, with the lowest $R^2$ score over 0.85.  The learned trees with depth 8 give the best performance, followed by the trees with depth 5. Intuitively a larger tree may give a better prediction. The scores of \texttt{lasso2} and \texttt{lasso3} are very similar, and slightly better than \texttt{lasso1}.
This result makes sense since LASSO with a higher regularization parameter $\alpha$ (i.e., \texttt{lasso1}) implies the use of less features, and hence, may have less prediction power.  The tree with depth 3 shows much worse performance than the larger trees, and it is on average worse than the three linear regression models. 

\begin{figure}[thb]
\begin{center}
\includegraphics[width=.8\columnwidth]{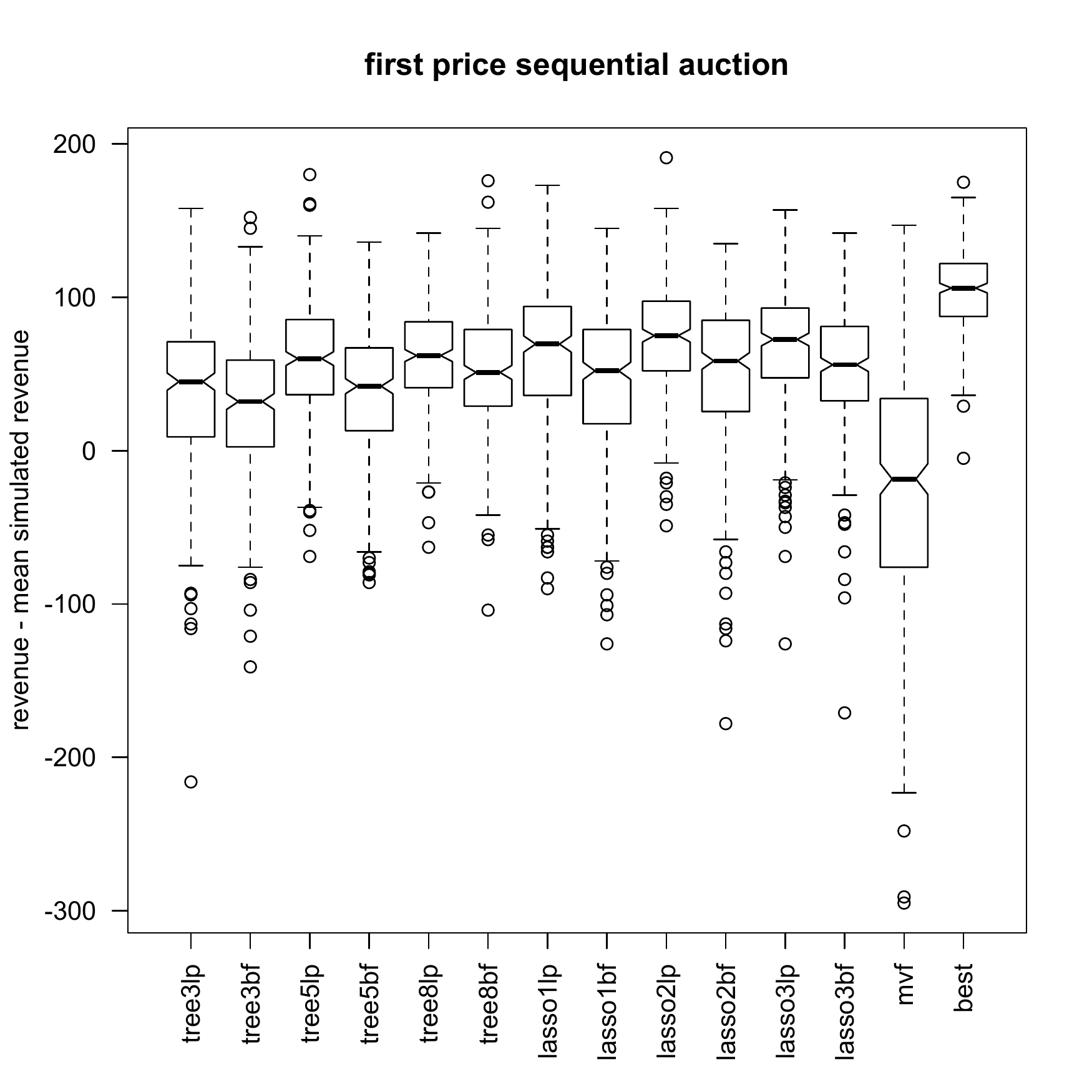}
\end{center}
\caption{The performance of the different ordering methods evaluated by the simulator, compared to \texttt{mean5000}. The simulated auctions are first-price auctions. Each box contains 300 values.   \label{fig:normalboxplot}}
\end{figure}

We now discuss the actual performance of the different ordering methods. After every ordering method returns its best ordering, these orderings are then evaluated in the simulator to get corresponding revenues. As we ran 300 different instances (60 sets of different agents, each with 5 sets of different items),  each method has 300 such revenues. We calculate the frequencies of wins by comparing the revenues in pairs in Table~\ref{tab:wins}. One obvious conclusion from the table is that the ordering heuristic \texttt{mvf} (i.e., most valuable item first) performs worst, regardless of which method it compared to. In fact, this heuristic performed even worse than the random ordering strategy \texttt{mean5000} (123 wins vs. 174).\footnote{Notice that there is one instance where \texttt{mean5000} is better than \texttt{best5000}. This is due to the random scheme that we used in selecting winners who give two identical bids. Consequently, evaluating the same ordering twice in the simulator may result in two different revenues. We want to point out that this does not happen  often and the effect is often negligible.}
This result contradicts the theoretical finding that was concluded using much simpler auction settings. Another observation is that given the learned models, the developed white-box methods win over the black-box methods more than half of the time. This holds consistently for all 12 proposed methods (wins lp vs. bf: 171, 197, 178, 185, 195, 189). It shows that our new way of utilizing the internal structure of the learned models for optimization is promising.

\begin{figure}[thb]
\begin{center}
\includegraphics[width=.8\columnwidth]{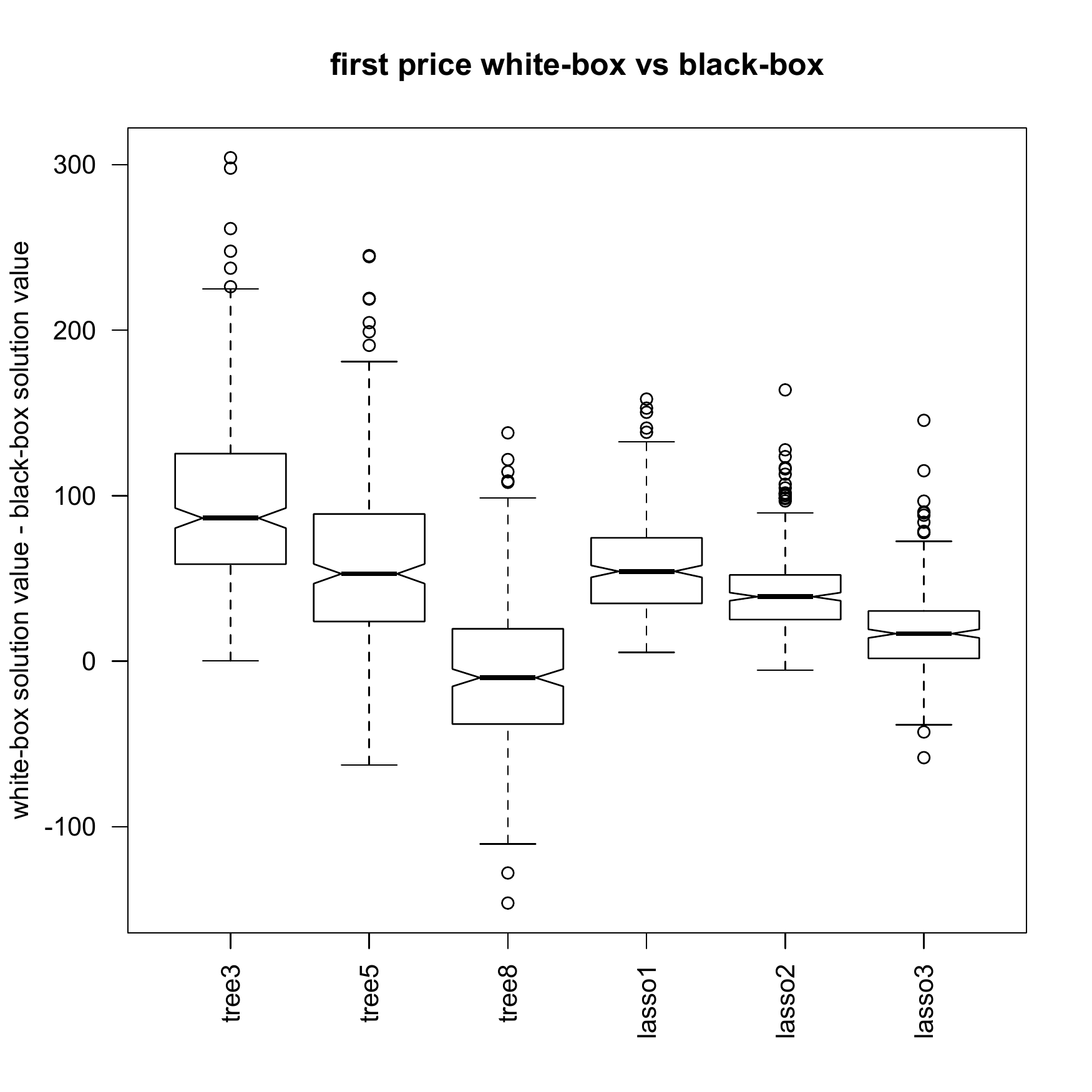}
\end{center}
\caption{The performance difference between the LP model and the best-first search, evaluated by the predictive model. Each box contains 300 values. The simulated auctions are first-price auctions.   \label{fig:normalmodel}}
\end{figure}

\begin{figure}[thb]
\begin{center}
\includegraphics[width=.8\columnwidth]{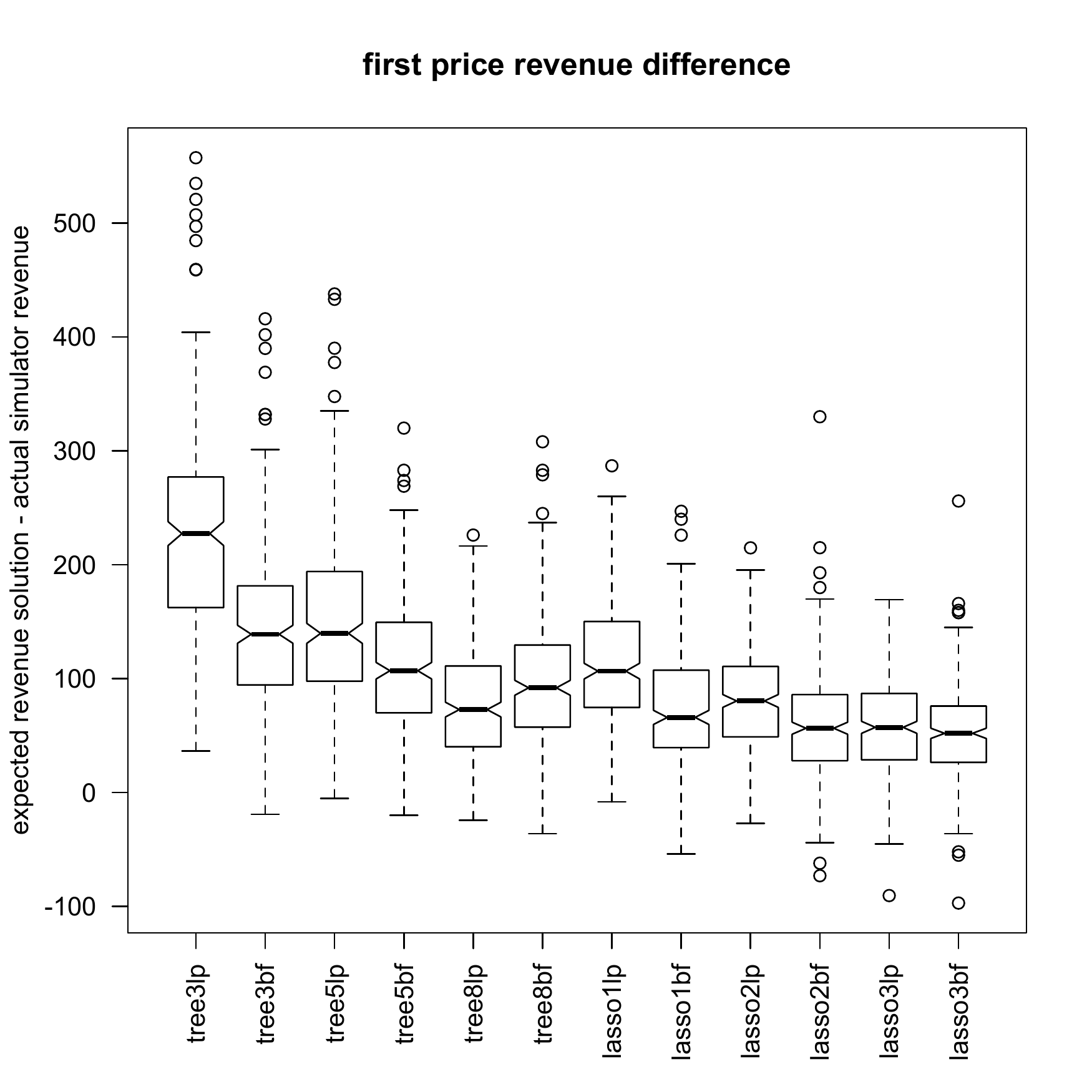}
\end{center}
\caption{The performance difference between the values evaluated by the model and the values evaluated by the simulator. Each box contains 300 values. The simulated auctions are first-price auctions.   \label{fig:normaldiff}}
\end{figure}

If we look at the results of the white-box methods built from the learned regression trees, i.e., \texttt{tree3lp}, \texttt{tree5lp}, \texttt{tree8lp}, we notice that \texttt{tree5lp} and \texttt{tree8lp} performed similarly and they are slightly better than \texttt{tree3lp}. This result is consistent with the higher $R^2$ scores of the larger trees.
Interestingly, despite their lower $R^2$ scores, the linear regression LP methods return good orderings especially \texttt{lasso2lp} and \texttt{lasso3lp} which are on average better than the three regression tree LP models. These are further confirmed with Figure~\ref{fig:normalboxplot}, which depicts the revenue differences in the simulator between the ordering methods and \texttt{mean5000}. It is more obvious from this figure that the proposed linear regression LP models are the better optimization methods than the tree LP models in practice. We believe that this is due to the cascading inaccuracies, caused by the \texttt{sum} feature that relates the predicted value to the predictions of earlier auctioned items (see discussions in Section~\ref{sec:power}). Due to the crisp boundaries in regression trees, the effect of these errors on the solution evaluation is much greater than using linear regressors. We believe the effect is smaller for larger trees because they are more accurate.

Figure~\ref{fig:normalmodel} shows the performance difference between the LP model  and the best-first search, which are built upon the same learned regression model. The solutions are evaluated using the predictive models. The value differences between the white-box and the black-box methods are more significant on smaller trees than on bigger trees (\texttt{tree3} vs \texttt{tree5} vs \texttt{tree8}), and on linear models with higher regularization parameter than on lower regularization parameter (\texttt{lasso1} vs \texttt{lasso2} vs \texttt{lasso3}). This trend shown in the results is somehow expected. The smaller tree leads to a smaller LP model, which is easier to optimize by the solver and consequently gives a much better performance than the best-first search. Similarly, the linear regression with a higher regularization parameter $\alpha$ implies less feature values to compute during white-box optimization, and therefore, its advantage over the black-box method is more obvious than \texttt{lasso2} and \texttt{lasso3} which are with smaller values of $\alpha$.

We observe that all white-box LP methods are better than the black-box methods, except the LP models resulting from the trees with depth 8. The depth 8 regression trees perform better for best-first search when evaluated on the model (Figure~\ref{fig:normalmodel}), but better for LP when evaluated in the simulator (see Figure~\ref{fig:normalboxplot}). The most likely reason for the strange behavior of the depth 8 trees is that
the best-first search outperforms LP on harder to predict instances.\footnote{Another possible cause is that the LP solver makes small rounding errors. Such errors are unavoidable because using both very small and very large coefficients and/or precision in one problem formulation can cause numerical instability. We therefore round the mean values of the leaf predictions to two digits after the decimal point in the regression tree models before translating it to LP. Unfortunately, due to the crisp decision boundaries in regression trees, these small errors can sometimes have a large effect.} Intuitively, because harder-to-predict instances typically result in larger models, they are harder to optimize in CPLEX. We checked this cause by investigating whether the $R^2$ scores of the depth 8 regression tree are correlated with which method performing better in the model evaluation. The mean of these $R^2$ scores are 0.950 when the LP performs better, and 0.942 when the best-first performs better. Although this difference seems small, it is significant.

Moreover, 
the small difference in $R^2$ scores between trees with depth 5 and depth 8 also has a significant effect on the difference between their model and simulator evaluations (e.g., due to cascading errors). 
To demonstrate this, we report in Figure~\ref{fig:normaldiff} the solution differences of the same ordering methods when being evaluated by the model and the simulator. The purpose of this comparison is to test whether the predicted outcome (using learned models) corresponds to the actual outcome (using simulator). This test is important as in general, there are no simulators available to evaluate the solutions.
The figure demonstrates that the linear regression based optimization methods return more reliable solutions, i.e., their solutions evaluated on the learned linear models are closer to the solution values returned by the simulator. Note that it is logical that most values are over estimated because we the optimization tries to solve a maximization problem.
The trees with depth 5 and 8 show a significant difference in this evaluation. The depth 3 trees end up with the highest evaluation difference, and overestimate the solution values the most.

\begin{figure}[thb]
\begin{tabular}{cc}
\includegraphics[width=0.5\textwidth]{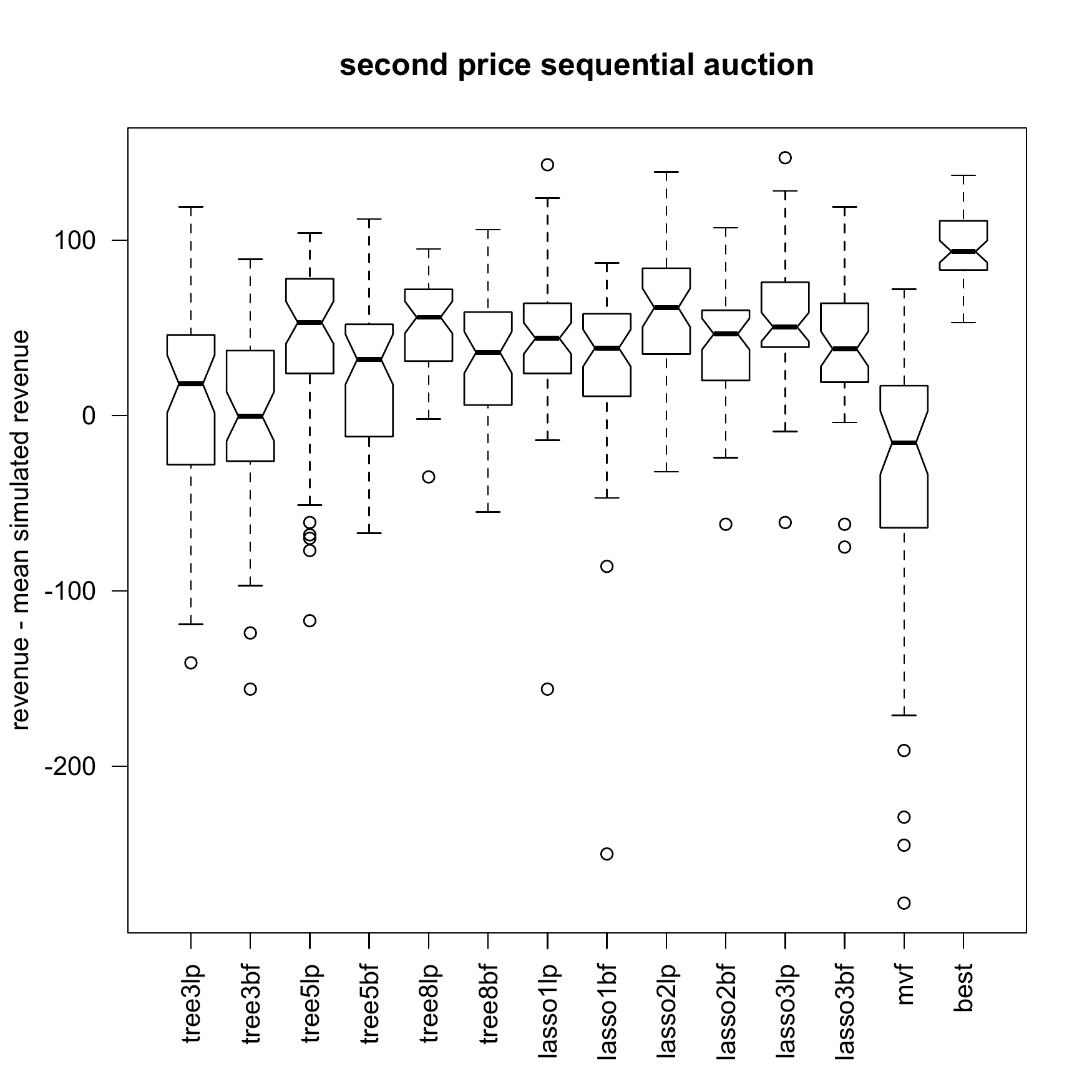}
&
\includegraphics[width=0.5\textwidth]{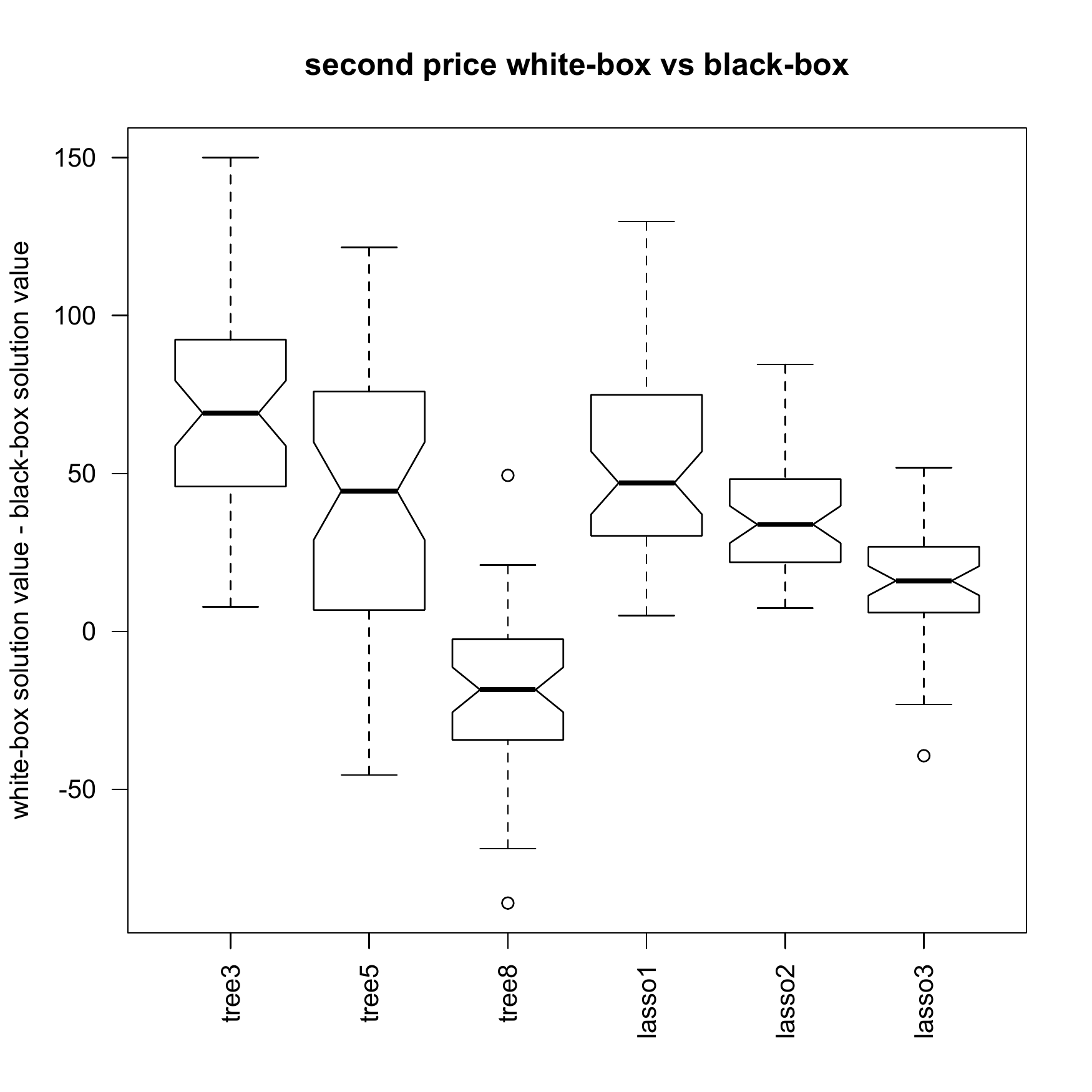}
\end{tabular}
\caption{Performance of the different ordering methods with second-price auctions and truth-telling, myopic agents. The two figures show the performance evaluated by the simulator and the model respectively. Each box contains 50 values.\label{fig:secondprice}}
\end{figure}

\begin{figure}[thb]
\begin{tabular}{cc}
 \includegraphics[width=0.5\textwidth]{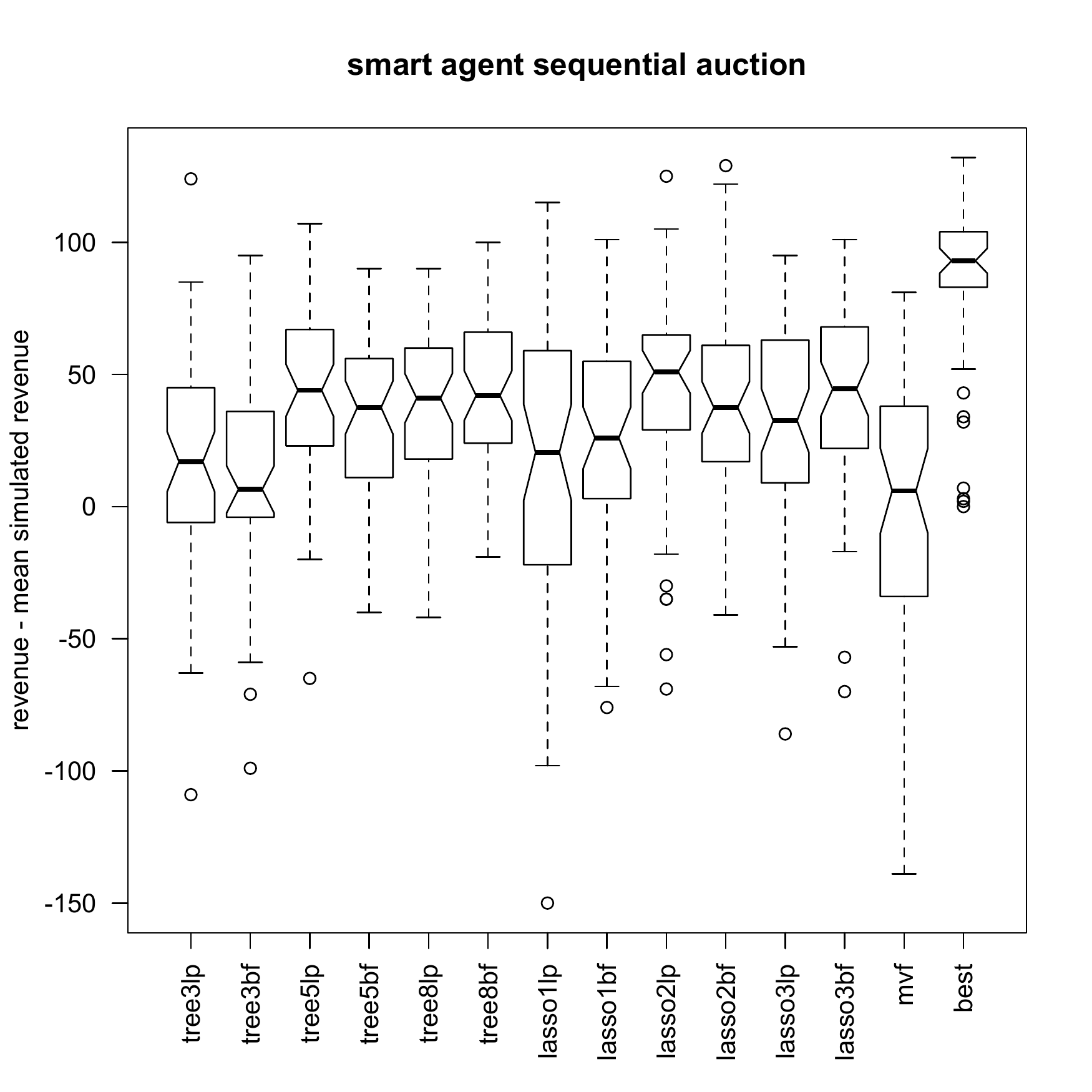}
&
\includegraphics[width=0.5\textwidth]{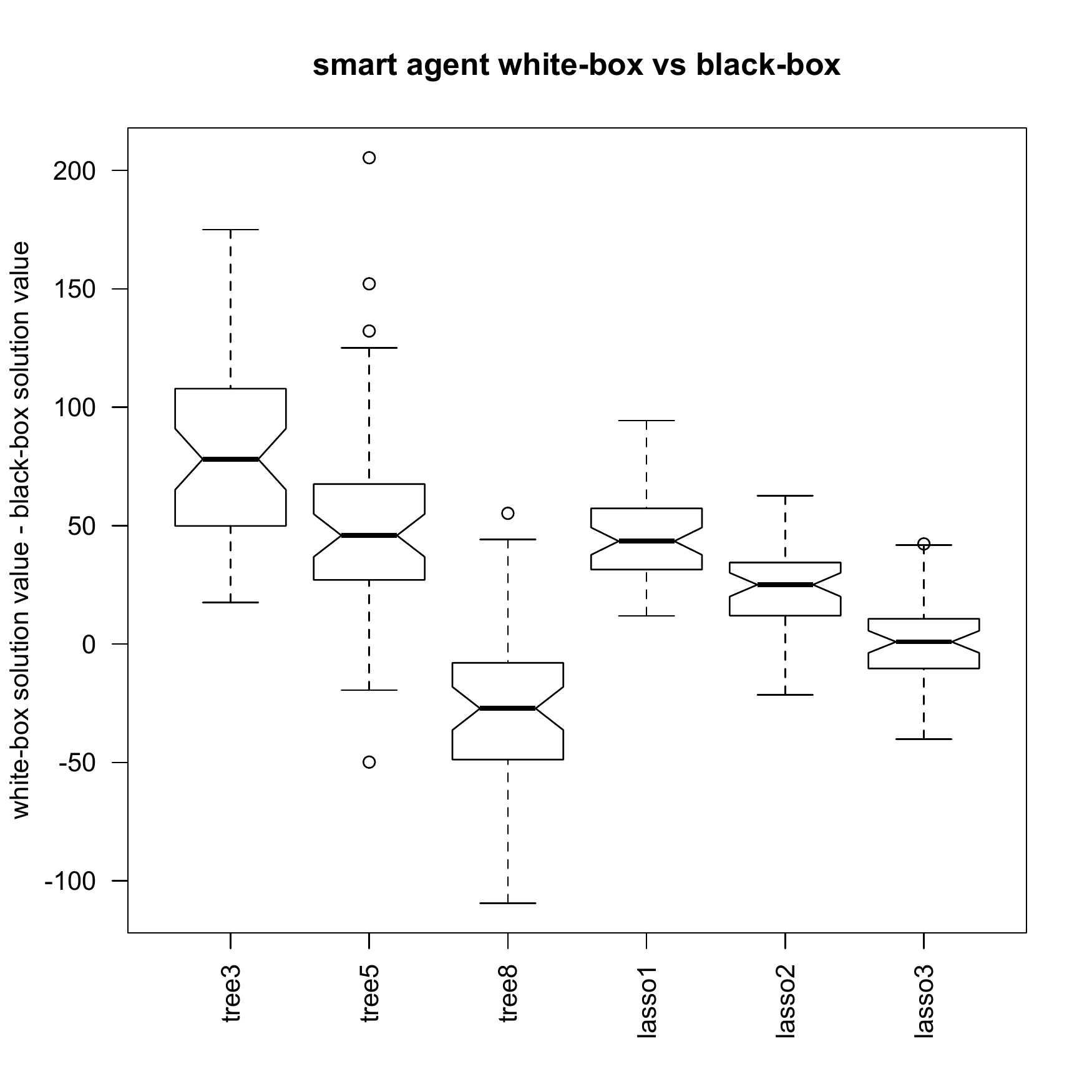}
\end{tabular}
\caption{Performance of the different ordering methods with second-price auctions and smart agents. The two figures show the performance evaluated by the simulator and the model respectively. Each box contains 50 values.\label{fig:smart}}
\end{figure}

\subsection{Experiment 2 and 3}\label{sec:exp23}

In order to demonstrate that our method of auction optimization using learned models is robust to the used auction rule or the bidding strategies, we test it in a second-price auction in Experiment 2, and with smart agents in Experiment 3. We generate 10 sets of agents. For each set of agents we run new sets of items 5 times.
Figures~\ref{fig:secondprice} and~\ref{fig:smart} show the same plots for these settings as Figures~\ref{fig:normalboxplot} and~\ref{fig:normalmodel} for the setting in the first experiment.

In the second experiment, we test with truthful-telling bidders in second-price auctions. The only difference with the setting in the first experiment is that here agents bid their true values on the items and pay an amount equal to the second highest bid, or the reserve price if there is only one bidder. As Figure~\ref{fig:secondprice} shows, the results are very similar to those in the first experiments: (1) all methods outperform the naive ordering strategies from the literature, (2) the white-box outperforms the black-box methods, and (3) the linear regression models perform best. However, the difference between the best performing methods \texttt{tree5} and \texttt{lasso2} is no longer significant.

In the third experiment, we test with smart agents that aim to maximize their final utility in a second-price auction. 
Specifically, when deciding what value to bid on the current item $r_i$, they have access to the auction simulator and use it to run the remaining items $I'\setminus r_i$. For computational reasons, when running the remaining items $I'\setminus r_i$, it is assumed that every agent bids truthfully and pays according to the second-price rule in these runs. For every bid $r_i$, they run the simulator twice: once in the situation where they bid the item $r_i$ with their reservation prices (\texttt{run1}), and once where they do not buy the item (\texttt{run2}). They then decide what value to bid according to the following rules:
\begin{itemize}
\item If after \texttt{run2} the agent has a remaining budget greater than its value for the item, it bids truthfully. The intuition is that it is better to buy an item than to have remaining budget.
\item Else, if the total utility after \texttt{run2} is less than after \texttt{run1}, it also bids truthfully. When it is better to buy an item, try to obtain it.
\item Else, it bids its true value minus the difference in utility after \texttt{run2} and \texttt{run1}. When it is $x$ monetary units better to not buy an item, try to obtain it for the value minus $x$.
\end{itemize}
Using these rules, the agents bid the highest value that they expect will give them an increase in utility. 

As can be seen in Figure~\ref{fig:smart}, also in this challenging setting, our method performs significantly better than the naive ordering rules from the literature. There is however a much larger variance in the performance of the different methods, causing the difference between black-box and white-box to be insignificant in the simulator. When evaluated on the model, however, white-box is still better than black-box in all cases except \texttt{tree8}. An interesting final observation is that, with these smart agents, the \texttt{mvf} method does seem to perform slightly better than \texttt{mean5000} (although not significantly), while in the other experiments it performed consistently worse.


\nop{

\begin{table*}[tbh]
\caption{The performance difference between the white-box and black-box methods (LP - best first), evaluated using the regression tree models. The numbers are averaged over 5 runs using different items and the same set of agents and trees.\label{tab:lpbf-tree}}
\begin{center}
\footnotesize{
\begin{tabular}{l|cccccccccc}
tree depth  & 1 & 2 & 3 & 4 & 5 &  6 & 7 & 8 & 9 & 10 \\ \hline
3 & 53 & 1.4 & 66.6 & 40.8 & 37.8 & 45.6 & -14.4 & 9.8 & 21.6 & 35.8 \\ \hline
5 & 13.4 & 23.2 & -37.8 & -57.0 & 35.0 & -6.4 & -94.4 & -20.4 & 3.8 & -126.0 \\ \hline
8 & -62.6 & 0.0 & -56.2 & -78.0 & -56.4 & -89.4 & -110.2 & -3.4 & -13.6 & -78.4 \\ \hline
\end{tabular}
}
\end{center}
\end{table*}
}

\nop{
\begin{table*}[tbh]
\caption{The performance of the different methods in the auction simulator, divided by the performance of a lower bound of the optimum (i.e., the best run found over 500 random orderings in the simulator). The numbers are averaged over 5 runs using different items and the same set of agents and trees.\label{tab:lpbf}}
\begin{center}
\footnotesize{
\begin{tabular}{l|cccccccccc}
method  & 1 & 2 & 3 & 4 & 5 &  6 & 7 & 8 & 9 & 10 \\ \hline
best first 3 & 0.947 & 0.977 & 0.956 & 0.973 & 0.926 & 0.969 & 0.911 & 0.935 & 0.976 & 0.955 \\
best first 5 & 0.957 & 0.966 & 0.978 & 0.956 & 0.949 & 0.947 & 1.008 & 0.964 & 0.975 & 0.987 \\
best first 8 & 0.972 & 0.975 & 0.931 & 0.928 & 0.986 & 0.939 & 0.936 & 0.983 & 0.967 & 0.969 \\
LP 3 & 0.988 & 0.963 & 0.963 & 0.909 & 0.985 & 0.983 & 0.952 & 0.945 & 0.983 & 0.941 \\
LP 5 & 0.970 & 0.964 & 0.957 & 0.933 & 0.968 & 0.949 & 1.009 & 0.959 & 0.973 & 0.935 \\
LP 8 & 0.999 & 0.979 & 0.936 & 0.961 & 1.030 & 0.980 & 0.963 & 0.985 & 0.973 & 0.973 \\
random & 0.922 & 0.899 & 0.923 & 0.916 & 0.907 & 0.952 & 0.969 & 0.917 & 0.940 & 0.946 \\
least valuable & 0.761 & 0.817 & 0.852 & 0.855 & 0.788 & 0.826 & 0.754 & 0.806 & 0.900 & 0.878 \\
most valuable & 0.872 & 0.884 & 0.826 & 0.875 & 0.823 & 0.870 & 0.824 & 0.848 & 0.859 & 0.735 \\
\end{tabular}
}
\end{center}
\end{table*}
}

\nop{
\begin{table*}[tbh]
\caption{The performance differences between different methods (row method - column method), evaluated by the simulator. The numbers are averaged over 300 runs. \label{tab:meandif}}
\begin{center}
\tiny{
\begin{tabular}{l|ccccccccccccccccc}
& \begin{sideways}    tree3lp \end{sideways}& \begin{sideways}tree3bb \end{sideways}&\begin{sideways}tree5lp\end{sideways} &\begin{sideways}tree5bb\end{sideways}& \begin{sideways}tree8lp\end{sideways} &\begin{sideways}tree8bb\end{sideways} &\begin{sideways}lasso1lp\end{sideways}&\begin{sideways} lasso1bb\end{sideways}& \begin{sideways}lasso2lp\end{sideways}&\begin{sideways} lasso2bb\end{sideways}&\begin{sideways} lasso3lp\end{sideways} &\begin{sideways}lasso3b\end{sideways}&\begin{sideways}sorted \end{sideways}& \begin{sideways} reverse\end{sideways} &\begin{sideways}best5000 \end{sideways}& \begin{sideways}mean5000 \end{sideways}&\begin{sideways}total\end{sideways} \\\hline
 tree3lp  &  0 &  10 & -20 &   0 & -22 & -13 & -24 &  -6 & -34 &  -14 &  -28 &  -15 &  154 &   65 &  -65 &   38 &    2 \\
 tree3bb & -10 &   0 & -30 & -10 & -32 & -22 & -34 & -15 & -44 &  -23 &  -38 &  -25 &  144 &   56 &  -75 &   29 &   -8\\
 tree5lp  & 20 &  30 &   0 &  20 &  -2 &   8 &  -4 &  15 & -14 &    7 &   -8 &    5 &  174 &   86 &  -45 &   59 &   22\\
 tree5bb  &  0 &  10 & -20 &   0 & -22 & -12 & -24 &  -5 & -34 &  -13 &  -28 &  -15 &  154 &   66 &  -65 &   39 &    2\\
 tree8lp  & 22 &  32 &   2 &  22 &   0  &  9 &  -2 &  16 & -12 &    8 &   -6 &    7 &  176 &   87 &  -44 &   60 &   24\\
 tree8bb  & 13 &  22  & -8 &  12 &  -9  &  0 & -12 &   7 & -21 &   -1 &  -16 &   -2 &  166 &   78 &  -53 &   51 &   14\\
 lasso1lp &  24&   34 &   4 &  24&    2 &  12 &   0&   19 & -10 &   11 &   -4 &    9&   178 &   90&   -41&    63&    26\\
 lasso1bb &   6 &  15 & -15 &   5&  -16 &  -7 & -19&    0 & -28 &   -8 &  -23 &   -9 &  159 &   71 &  -60 &   44 &    7\\
 lasso2lp &  34 &  44 &  14 &  34 &  12 &  21 &  10 &  28  &  0 &   20 &    6 &   19 &  188 &   99 &  -31 &   72 &   36\\
lasso2bb  & 14  & 23 &  -7  & 13 &  -8  &  1  & -11 &   8 & -20 &    0 &  -15 &   -1 &  167 &   79 &  -52 &   52 &   15\\
lasso3lp  & 28  & 38 &   8  & 28 &   6  & 16  &  4  & 23  & -6  &  15  &   0  &  13  & 182  &  94 &  -37  &  67  &  30\\
lasso3bb  & 15  & 25 &  -5  & 15 &  -7 &   2  & -9  &  9 & -19  &   1  & -13  &   0  & 169  &  80 &  -50  &  53  &  17\\
sorted &-154 & -144 &-174 &-154 &-176 &-166 &-178& -159& -188  &-167 & -182 & -169  &   0  & -88 & -219 & -115 & -152\\
reverse & -65 & -56 & -86 & -66 & -87 & -78 & -90&  -71&  -99  & -79 &  -94  & -80  &  88  &   0 & -131 &  -27  & -64\\
best5000 &  65 &  75 &  45 &  65 &  44 &  53 &  41&   60 &  31 &   52 &   37 &   50 &  219 &  131 &    0 &  104  &  67\\
mean5000 & -38 & -29 & -59 & -39 & -60 & -51 & -63 & -44 & -72  & -52  & -67 &  -53 &  115 &   27 & -104  &   0  & -37\\
\end{tabular}
}
\end{center}
\end{table*}

\begin{table*}[tbh]
\caption{Average absolute errors of 300 instances of each method, evaluated by the simulator\label{tab:abserror}}
\begin{center}
\scriptsize{
\begin{tabular}{cccccccccccc}
    tree3lp & tree3bb &tree5lp &tree5bb& tree8lp &tree8bb &lasso1lp& lasso1bb& lasso2lp& lasso2bb& lasso3lp &lasso3bb \\\hline
    226  &141 & 150 & 112 &  79 &  97 & 115&   77 &  82 &   62  &  61 &   57 \\
\end{tabular}
}
\end{center}
\end{table*}
}

\section{Comparison to related work}\label{sec:related}

We discuss related works and how our work contributes to and from several related research communities. 

\subsection{Interplay between mathematical optimization and machine learning}

Many studies have investigated the interplay of data mining and machine learning with mathematical modeling techniques, see overview in e.g.~\cite{Bennett06,Meisel10,corne:synergies}.
Most of these investigate how to use data mining to estimate the value of parameters in decision making models or to replace decision model structure when it cannot be fully determined from the hypotheses at hand. For instance, Brijs et al.~\cite{BrijsSVW04} build a decision model as an integer program that maximizes product assortment of a retail store. The decision model is then refined by incorporating additional decision attributes that are the learned patterns from recorded sales data. Li and Olafsson~\cite{LiO05} use a decision tree to learn dispatching rules that are then used to decide which job should be dispatched first. These dispatching rules are previously unknown, and it is assumed that it is worthwhile to capture the current practices from previous data.  Gabel and Riedmiller~\cite{Gabel08} model production scheduling problem as multi-agent reinforcement learning where each agent makes its dispatching decisions using a reinforcement learning algorithm based on a neural network function approximation. 

Another line of work investigates how to use learning techniques during optimization in order to learn properties of good solutions. For instance, Defourny et al.~\cite{Defourny12scenario} combine the estimation of statistical models for returning a decision rule given a state with scenario tree techniques from multi-stage stochastic programming. This line of work shares similarities with the field of black-box optimization, see, e.g.,~\cite{jones1998efficient, shan2010survey, rios2013derivative}. In black-box optimization, methods are used to approximate a function with unknown analytical form and which typically is expensive to execute. In contrast, in multi-stage stochastic programming this form is known but stochastic. An often applied technique for black-box optimization is the use of surrogate methods, see, e.g.,~\cite{koziel2011surrogate}. Surrogates are approximations of the black-box function that are less expensive to execute, typical examples include linear/polynomial regression, neural networks, and other methods from machine learning. These functions are trained during optimization from (as few as possible) black-box function calls.

As learning tasks can lead to challenging optimization problems, researchers have also applied mathematical optimization methods in order to increase learning efficiency. For instance, Bennett et al.~\cite{Bennett93bilinearseparation} use linear programming for determining linear combination splits within two-class decision trees.
Chang et al.~\cite{ChangRaRo12} propose a Constrained Conditional Model (CCM) framework to incorporate domain knowledge into a conditional model for structured learning, in the form of declarative constraints. CCMs solve prediction problems.
In~\cite{UneyT06}, the authors build a mixed integer program for multi-class data classification. A comprehensive overview of optimization techniques used in learning is given in~\cite{sra2012optimization}.
Researchers are also interested in using mathematical optimization methods in order to find entire models and rules, see e.g.,~\cite{Carrizosa13,Raedt10constraint,heuleverwer}.

Our approach fits in the first line of research of this interplay. The proposed best-first search method uses regression models to learn good orderings, which is then applied during search to evaluate the solutions of OOSA. Hence, similar to the existing work, the models learned from data are used in a black-box fashion. This approach shares similarities with surrogate methods for black-box optimization. An important difference is that the (surrogate) models here are learned from data. 

Furthermore, different from the existing work, our proposed white-box optimization method makes all the properties of the learned models visible to the optimization solver. To the best of our knowledge, this way of using the results of machine learning is entirely novel. Moreover, it realizes the construction of an optimization model automatically from data, providing a new way of modeling in mathematical optimization.

\subsection{Sequence models}\label{sec:mdp}

As an auction ordering is essentially a sequence of items, our work is also related to the many machine learning approaches for sequence modeling. However, to the best of our knowledge none of the existing sequence models fits our auction setting.
Language models such as deterministic automata~\cite{delaHiguera10grammatical} are too powerful since they can model every possible sequence independently and therefore require too much data to learn accurately. Short sequence models such as hidden Markov models or N-grams~\cite{Bishop06pattern} do not model the dependence on items sold a long time (more than the sliding window length) before.

Markov decision processes (MDPs) (see, e.g.,~\cite{puterman2009markov}) may be closest to our auction setting, as they can directly model the expected price per item and come with methods that can be used to optimize the expected total reward (revenue). However, we notice that a straightforward implementation of the auction design problem as an MDP is not possible.
Let us try to model auction design as an MDP. Because of the Markov assumption, every state in this MDP has to contain all the relevant information for the auctioneer's decision on which item to auction: the set of available items, the bidders' valuation functions, budgets and strategies, and for every bidder the items (s)he already possesses.
In every state $q$ of this process, the auctioneer can choose what item $i$ to put to auction from a multiset of available items $I$. The next state $q'$ resulting from auctioning item $i$ depends on the bidders and their valuations. These are unknown to the auctioneer, but probabilities can be used to estimate them. These probabilities $P_i(q,q')$ provide a distribution over the possible next states and the corresponding rewards $R_i(q,q')$, given $i$ is auctioned. In every possible next state $q'$, the set of available items is equal to $I - \{i\}$, i.e., equal to the items in $q$ minus the sold item $i$. The goal of the auctioneer is to maximize the expected rewards (revenue) for a given set of items $I$. In every state $q$, (s)he thus has to take an action (choosing an item) that maximizes the sum of the expected rewards $V(q,I)$ of items in $I$ starting in state $q$:

\[
V(q,I) = \arg\max_{i \in I} \left( \sum_{q'} P_i(q,q')R_i(q,q') + V(q',I-\{i\}) \right).
\]

In this equation, we separated $I$ from $q$ to highlight the major hurdle that needs to be overcome in order to represent the auction design problem as an MDP. $I$ needs to be included in the MDP since it determines the set of available actions in every state. However, since the set of items is finite this makes the MDP acyclic and at least as large as the number of possible subsets of items from $I$ (assuming the effect of their ordering is represented differently), i.e., at least $2^{|I|}$. In order to learn the rewards and transition probabilities, an auctioneer would therefore need an extremely large data sample. 


This intuitively shows why it is difficult to represent the auction design problem as an MDP. However, with a suitable factored representation of the states and/or function approximation~\cite{puterman2009markov} of the rewards, it could be possible to represent our auction problem as an MDP. In this case, a major hurdle will be to find a representation that results in Markovian states, which is needed to apply the dynamic programming methods. Since the problem of deciding whether good auction ordering exists is NP-complete (Theorem~\ref{thrm:complexity}), and these methods run in polynomial time, this is impossible without an exponentially large state space unless $\P = \NP$. Our method relies on solvers and search methods for NP-complete problems, making a polynomial state space possible, and therefore requiring much less data to estimate the model parameters.

\subsection{Auction design}
In the auction literature, a few existing papers investigate the impact of ordering on the performance of sequential auctions.  
One line of related research focuses on theoretical analysis. In the economics literature (see~\cite{Elmaghraby03importance,Pitchik09budget,Subramaniam09optimal}), such theoretical studies were typically carried out under very restricted markets. The main research focus there is to analyze equilibrium bidding strategies of bidders who compete for (usually) two items (heterogeneous or homogeneous), and then to gain insights on the impact of ordering on the auction outcome based on derived bidding behaviours.
For instance, Elmaghraby~\cite{Elmaghraby03importance} studies the influence of ordering on the efficiency of the sequential second price procurement auctions, where a buyer outsources two heterogeneous jobs to suppliers with capacity constraints. Suppliers can only win 1 job in this setting. The author shows that specific sequences lower procurement costs and identifies a class of bidders' cost functions where the efficient orderings (i.e. the auction rewards the jobs to the suppliers with the lowest total costs) and equilibrium bidding strategies exist.
Pitchik~\cite{Pitchik09budget} points out that in the presence of budget constraints, a sealed-bid sequential auction with two bidders and two goods may have multiple symmetric equilibrium bidding functions, and the ordering of sale affects the expected revenue. If the bidder who wins the first good has a higher income than the other one, the expected revenue is maximized.
Subramaniam and Venkatesh~\cite{Subramaniam09optimal} investigate the optimal auctioning strategy of a revenue-maximizing seller, who auctions two items, which could be complements or substitutes.
They show that when the items are different in value, the higher valued item (among the two) should be auctioned first in order to increase the seller's revenue. 
A similar revenue-maximizing strategy is proposed by Benoit and Krishna~\cite{Benoit01multiple} in a complete information auction setting. The authors conclude that in such a setting, when selling two items to budget constrained bidders, it is always better to sell the more valued item first. However, this strategy does not optimize the revenue anymore when more than two items are to be auctioned.

In the computer science literature,
Elkind and Fatima~\cite{Elkind07maximizing} study how to maximize revenue in sequential auctions with second-price sealed-bid rules, where bidders are homogeneous, i.e., all their valuations are drawn from public known uniform distributions, they want to win only one item (but they can bid any of items). 
In this setting, the authors analyze the equilibrium bids, and develop an algorithm that finds an optimal agenda (i.e., ordering). 
Vetsikas et al. \cite{VetsikasJ09} study a similar auction setting, but unlike~\cite{Elkind07maximizing}, they assume the valuations are known to the bidders at the beginning of the auction. The focus of their work was to compute the equilibrium strategies for bidders. Later, Vetsikas \cite{Vetsikas13} analyzes the bidding strategies for budget constrained bidders in sequential Vickrey auctions. However, it is a challenge to compute the equilibrium strategies in practice.

Several empirical research has been conducted to test the theoretical findings in the economics community. Grether et al.~\cite{Grether09} report on a field experiment that tests the ordering strategies of a seller in sequential, ascending automobile auctions. 
They conclude that the worst performing ordering in terms of revenue is for the seller to auction vehicles from highest to lowest values. 
Raviv~\cite{Raviv06} uses an example to demonstrate that there are cases where the ordering of the auction does not affect revenue using a second price seal-bid auction. In his example, there are two heterogeneous items for sale among three risk neutral bidders. Raviv shows that when the ordering is randomized, the expected selling price stays the same, no matter which one of the items is sold first.

We are not the first who consider learning from the previous auctions. However, the difference lies in the fact that the most existing work study how bidders learn from the past information, and update their bids.
Boutilier et al.~\cite{Boutilier99sequential} propose a learning model for bidders to update their bidding policies in sequential auctions for resources with complementarities. The bidding strategies are computed based on the estimated distribution over prices, that is modeled by dynamic programming. 
Goes et al.~\cite{Goes10} present an empirical study of real sequential online auctions. They analyze the data from an online auction retailer, and show that bidders learn and update their willingness to pay in repeated auctions of the same item.
In~\cite{Pinker10}, the authors show the benefits of using earlier auction data for the management of sequential, multi-unit auctions, where the seller needs to split its entire inventory into sequential auctions of smaller lots in order to increase its profit. In their work, an auction feedback mechanism is developed based on a Bayesian model, and it is used to update the auctioneer's beliefs about the bidders' valuation distribution.

Our contribution to the auction literature lies on the fact that our approach can be applied to design optimal auctions based on historical auction data. The advantage of using machine learning and data mining methods is that they are robust to the uncertainty (or noise), and hence have high potential to be applied to real-world auction design. Moreover, the approach itself is general and can be applied to many different auction optimization problems, such as finding best reserve price for items for sale, or maximizing social welfare instead of revenue. The only necessary change is on the selection of the features for learning regression models.

\section{Conclusions and future work}
Mathematical optimization relies on the availability of knowledge that can be used to construct a mathematical model for the problem at hand. This knowledge is not always available. For instance, in
multiagent problems,
agents are autonomous and
often unwilling
to share their local information. Frequently, this autonomy and private information influence the outcome of the optimization, making finding an optimal solution very difficult. 
In this paper, we adopt the idea of using machine learning techniques to estimate these influences for an optimization problem with many unknowns: the optimal ordering for sequential auctions (OOSA) problem.

We have demonstrated our approach by transforming historical auctions into data sets for learning regression trees and linear regression models, which subsequently are used to predict the expected value of orderings for new auctions.
We proposed two types of optimization methods with learned models, a black-box best-first search approach, and a novel white-box approach that maps learned models to integer linear programs (ILP).
We built an auction simulator with a set of bidder agents to simulate an auction environment. The simulator was used for generating historical auction data, and for evaluating the orderings of items returned by our methods. We ran an extensive set of experiments with different agents and bidding strategies. Although optimizing the orderings in sequential auctions is a hard problem, our proposed methods obtained very high values, significantly outperforming the naive methods proposed in the literature.
The experimental results also demonstrate the advantage of using the white-box method for optimization, which significantly outperforms the black-box approach in nearly all settings. In addition, they indicate that when the learned model becomes more complex,
it potentially results in more constraints and consequently, an increase in the time needed to solve the problem in a white-box fashion. Since more complex models are (potentially) better predictors, this shows a clear trade-off between modeling and optimization power in white-box optimization. In our opinion, the benefits of the white-box approach largely outweigh the benefits of using black-box optimization.

Besides an improved performance, a very big benefit of the white-box formulation is that it
provides a new way of obtaining traditional mathematical models.
Our method therefore has many other potential application areas, especially in problems where more and more data is being collected. 
Even in cases where there already exists a handcrafted optimization model, a model that is learned and translated using our method can easily be integrated into existing (I)LP formulations in order to determine part of the objective function based on data. In this way, one can combine the vast amount of expert knowledge available in these domains with the knowledge in the readily available data. We would like to investigate how such integration works in the future.


We chose a relatively simple auction model for ease of explanation in this paper. However, our approach works whenever regression models are able to provide reliable predictions of the bidding values. Hence we believe it can be applied to other auction formats with more complex valuation functions (i.e. combinatorial preferences~\cite{cramton2006combinatorial}) and more complex bidding strategies. In the future, we plan to test our approach on real auction data.

Our experiments also highlight some interesting properties of the white-box method. Firstly, they show an improvement in performance when the number of features is reduced and/or the models are less complex. It would therefore be very interesting to investigate the effect of pruning and feature selection or reduction on the performance of our methods. Secondly, they show a tendency of the regression tree optimizer to overestimate, i.e., find orderings that have a much higher expected revenue than its revenue in practice. Intuitively, the solver abuses the crisp nature of the regression tree in order to find a solution that satisfies exactly the right constraints. Part of the problem is that, although these constraints are learned from data, and therefore uncertain, the solver treats them as exact. Fortunately, there exists a long history of methods that try to optimize in the presence of such uncertainties in the area of robust optimization, see, e.g.,~\cite{BEN:09}. In future work, we will investigate the potential uses of these techniques for learned models. 

Recently, regression tree models with linear models in the leaf nodes have also been successfully used as black-box surrogate functions~\cite{verbeeck2013multi}. Since it is also straightforward to translate these trees given our two encodings (replace the leaf variables by indicators for which linear function to use), it would be very interesting to investigate the possibility of a white-box alternative.

\appendix

\section{Hardness of auction design using learned predictors}\label{sec:proofs}

We show that using predictive models instead of agents with utility functions does not reduce the complexity of the problem: it remains NP-complete for both regression trees and linear regression predictors.

\begin{lemma} Using regression trees, the problem of whether there exists an ordering that has a total predicted value of at least $K$ is NP-complete.
\label{lem:tree}
\end{lemma}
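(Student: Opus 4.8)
The plan is to prove Lemma~\ref{lem:tree} in two parts: membership in \NP{} and \NP-hardness.

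For membership, I would argue that an ordering itself serves as a polynomial-size certificate whose value is checkable in polynomial time. Scanning the ordered items from left to right, at each index $i$ every feature value can be computed from the $x$-assignment and the previously computed predictions --- the only feature depending on earlier predictions is \texttt{sum} (the total predicted value of items auctioned before index $i$, as in Table~\ref{table:dataset}), which the left-to-right order resolves --- and the prediction at index $i$ is obtained by walking the regression tree of that item's type from root to leaf, i.e.\ with polynomially many comparisons. Summing the leaf values and comparing with $K$ decides acceptance, and all intermediate numbers have polynomial bit-length, so the whole check is polynomial.

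For hardness I would reduce from PARTITION (or, even more directly, from SUBSET-SUM), mirroring Theorem~\ref{thrm:complexity}. Given integers $i_1,\ldots,i_n$ with $\sum_k i_k = 2B$ (we may assume the sum even, e.g.\ by doubling each $i_k$; an odd sum is a trivial no-instance), create $n$ item types $r_1,\ldots,r_n$, one item of each type, and for each $k$ let the regression tree for type $r_k$ be a single decision node testing the overall-\texttt{sum} feature against $B - i_k + \tfrac12$ (the $+\tfrac12$ merely avoids the equality-on-a-constant issue noted in the footnote, since here \texttt{sum} is integer-valued), with its left leaf predicting $i_k$ and its right leaf predicting $0$; set $K=B$. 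The construction is clearly polynomial and uses only the \texttt{sum} feature, so the hardness is robust to dropping the other features.

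The crux is the identity: over all orderings, the maximum total predicted value equals $\max\{\sum_{k\in S} i_k \;:\; S\subseteq\{1,\ldots,n\},\ \sum_{k\in S} i_k \le B\}$. Evaluating a fixed ordering is exactly a greedy knapsack fill: maintaining the running \texttt{sum} (which is the total predicted so far), item $r_k$ contributes $i_k$ precisely when doing so keeps the running sum at most $B$, and $0$ otherwise, so the contributing items form a subset of weight equal to the total predicted value and at most $B$. Conversely, front-loading the items of an optimal feasible subset $S^\ast$ forces each of them to be accepted (every partial sum stays $\le\sum_{k\in S^\ast} i_k\le B$) and then, by maximality of $S^\ast$, every remaining item is rejected, so the total predicted value equals $\sum_{k\in S^\ast} i_k$. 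Since no subset sum exceeds $B$, this maximum equals $B$ iff some subset sums to exactly $B$, i.e.\ iff the PARTITION instance is a yes-instance; together with membership in \NP{} this proves the lemma. I expect the main obstacle to be pinning down this max-equals-max-feasible-subset-sum identity cleanly --- in particular the ``no ordering overshoots $B$'' direction and the claim that front-loading an optimal subset makes the greedy fill accept exactly it --- plus the minor bookkeeping around integrality and the parity assumption. It is worth noting why the naive alternative of mirroring the auction of Theorem~\ref{thrm:complexity} (each tree predicting $2i_k$ while ``budget'' remains and $i_k$ afterwards) fails: without a hard cap, a large item can still be ``doubled'' when the running sum is already near the threshold, letting non-partition instances spuriously exceed the target; predicting $0$ once full is precisely what makes the reduction go through.
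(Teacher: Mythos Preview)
Your proof is correct. Membership in \NP{} is argued carefully, and your hardness reduction is sound: the depth-$1$ trees with threshold $B-i_k+\tfrac12$ on the overall-\texttt{sum} feature turn the left-to-right evaluation into exactly the greedy ``add if it fits under $B$'' process you describe, and your two-sided bound (every ordering yields a feasible subset sum; front-loading an optimal subset $S^\ast$ realizes $\sum S^\ast$ because maximality of $S^\ast$ blocks every later item) establishes the required identity cleanly.

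Your route, however, differs from the paper's. The paper does not build a fresh reduction; it simply observes that the two-agent auction used in Theorem~\ref{thrm:complexity} can itself be emulated by a small regression tree per item type (a depth-$2$ tree on \texttt{sum} that distinguishes the regime where bidder~2 still has budget, where bidder~2 can only partially afford, and where only bidder~1 remains), so the NP-hardness is inherited directly from Theorem~\ref{thrm:complexity}. In contrast, you bypass the auction semantics entirely and reduce from \textsc{Partition}/\textsc{Subset-Sum} via a knapsack-style cap, which yields strictly simpler (depth-$1$) trees and a self-contained argument that does not depend on the correctness of Theorem~\ref{thrm:complexity}. Your final paragraph even diagnoses why a naive ``$2i_k$ then $i_k$'' mimicry of the auction would fail without a hard cap; this is a genuine improvement in clarity over the paper's sketch, at the cost of not reusing the earlier theorem.
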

\begin{proof} The proof follows from the fact that we can use simple regression trees to model the preferences of the two agents from Theorem 1, and evaluating an ordering using these trees can be done in polynomial time. The regression tree for every item type $r_i$ is shown in Figure~\ref{fig:trees1}.
\begin{figure}[thb]
\centering
{\scriptsize
\begin{tikzpicture}[>=stealth]
    \node [ellipse,draw,text centered] (root) at (0,0) {$\texttt{sum} \leq  \sum I$};
    \node [ellipse,draw=none,text width=4em, text centered] at (0,0.5)  {Type $r_i$};
    \node [ellipse,draw,text centered] (left) at (-2,-1)  {$\texttt{sum} \leq \sum I - v_2(r_i)$};
    \node [ellipse,draw,text centered] (leftright) at (0,-2)  {predict $v_2(r_i)$};
    \node [ellipse,draw,text centered] (leftleft) at (-4,-2)  {predict $0$};
    \node [ellipse,draw,text centered] (right) at (2,-1)  {predict $v_1(r_i)$};
   \draw[->] (root) edge node [draw=none,left] {yes} (left);
   \draw[->] (left) edge node [draw=none,left] {yes} (leftleft);
   \draw[->] (left) edge node [draw=none,right] {no} (leftright);
   \draw[->] (root) edge node [draw=none,right] {no} (right);
\end{tikzpicture}
}
\caption{\label{fig:trees1} The regression tree for every item type $r_i$ used to model a partitioning problem.}
\end{figure}
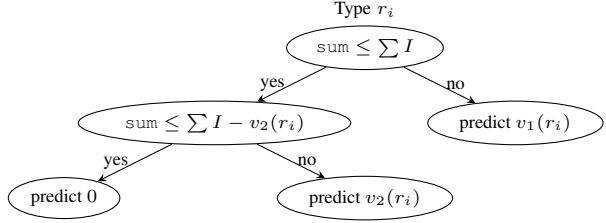
\end{proof}

\begin{lemma} Using linear regression predictors, the problem of whether there exists an ordering that has a total predicted value of at least $K$ is NP-complete.
\label{lem:linear}
\end{lemma}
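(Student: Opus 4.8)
The plan is to follow the template of Lemma~\ref{lem:tree} and ultimately Theorem~\ref{thrm:complexity}: establish membership in NP by a polynomial-time evaluation argument, and establish NP-hardness by reducing from the partition problem, this time encoding the two-bidder instance of Theorem~\ref{thrm:complexity} with a linear regressor per item type instead of a regression tree. For membership in NP, given a candidate ordering (an assignment to the $x_{i,r}$ variables) one sweeps over the indices $i=1,\ldots,n$, at each step computing \texttt{sold}, \texttt{remain}, \texttt{diff}, \texttt{index} directly from $x$, the \texttt{sum} feature incrementally from the predictions already made, and then the single linear prediction $v_{i,r}$ for the type auctioned at $i$; all of these are linear arithmetic operations and there are polynomially many of them, so the total predicted value is computable, and hence checkable against $K$, in polynomial time. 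As in Lemma~\ref{lem:tree}, the \texttt{sum} feature (which feeds earlier predictions back into later ones) is the ingredient that makes the total predicted value a genuinely non-linear function of the ordering, and it is what the hardness argument will exploit.

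For NP-hardness I would reduce from the same partition instance used in Theorem~\ref{thrm:complexity}: integers $I=\{i_1,\ldots,i_n\}$, types $R=\{r_1,\ldots,r_n\}$ (one unit of each), reserve prices $i_k$, target $K=\tfrac32\sum I$, and I write $T=\sum I$. For each type $r_k$ I would define a linear regression function of the \texttt{sum} feature that mimics the per-item revenue of the two-bidder auction: it should output (about) $v_1(r_k)=2i_k$ while the second bidder still has budget and drop to the reserve price $i_k$ once the collected revenue has reached $T$. Concretely one can take $v_{r_k}=2i_k-\tfrac{i_k}{T}\cdot\texttt{sum}$, the line through $(0,2i_k)$ and $(T,i_k)$, which under the recursion $S_i=S_{i-1}\bigl(1-\tfrac{i_{\tau(i)}}{T}\bigr)+2\,i_{\tau(i)}$ produces the total predicted value $S_n$. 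The remaining work is to rerun both directions of the proof of Theorem~\ref{thrm:complexity} with this function in place of the auction and show that $\max_{\text{orderings}} S_n\ge K$ holds exactly when $I$ is partitionable.

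The main obstacle is precisely this last step. Unlike a regression tree, whose decision nodes realise an exact threshold on \texttt{sum}, a single linear function can only ramp smoothly, so the model above reproduces the ``budget exhausted / not exhausted'' behaviour only at the endpoints $\texttt{sum}=0$ and $\texttt{sum}=T$ and merely interpolates in between. The crux will be to show this discrepancy is harmless for the orderings that matter: that an optimal ordering still has the block structure ``one partition half, then the other'', that the damped recursion is maximised, subject to attaining value at least $K$, exactly by a balanced split, and that no ``spread-out'' ordering can reach $K$. If the plain ramp turns out not to give a clean equivalence, the fallback is to engineer a steeper or clamped piecewise description that is still expressible as one linear regressor over the available features — for instance by padding $I$ with dummy integers so that \texttt{sum} stays in the range where the ramp is faithful, or by folding \texttt{index} into the function — so that the correspondence with partition goes through verbatim. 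Establishing this equivalence completes the NP-hardness, which together with membership in NP yields NP-completeness; everything else in the argument is routine.
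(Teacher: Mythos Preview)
Your membership-in-NP argument is fine, but the hardness reduction has a genuine gap: with the regressors $v_{r_k}=2i_k-\frac{i_k}{T}\cdot\texttt{sum}$ the total predicted value is \emph{completely independent of the ordering}. To see this, write $S_i$ for the running total after $i$ items and set $U_i=2T-S_i$; your recursion $S_i=S_{i-1}\bigl(1-\tfrac{i_{\tau(i)}}{T}\bigr)+2i_{\tau(i)}$ becomes $U_i=U_{i-1}\bigl(1-\tfrac{i_{\tau(i)}}{T}\bigr)$, so $U_n=2T\prod_{j}(1-i_j/T)$ and hence $S_n=2T\bigl(1-\prod_j(1-i_j/T)\bigr)$, which does not depend on $\tau$ at all. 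The decision problem is therefore trivial for every instance you produce, and no choice of threshold $K$ can encode partition. Your diagnosis that the obstacle is the ``ramp versus threshold'' discrepancy is misleading: the real issue is that feeding back a single global \texttt{sum} through a linear coefficient yields a symmetric product, and none of your proposed fallbacks (dummy padding, a steeper ramp) breaks that symmetry.

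The paper's fix is to introduce one extra item of an auxiliary type $y$ and to use the \emph{per-type} sum features rather than the global one: with $k=\tfrac12\sum I$, set $v(x_i)=v_i-\tfrac{v_i}{2k}\cdot\texttt{sum}(y)$ and $v(y)=2k-\sum_i\texttt{sum}(x_i)$. Now $\texttt{sum}(y)$ is $0$ for every $x_i$ auctioned before $y$ and equals $v(y)$ for every $x_i$ auctioned after $y$, so the only thing that matters is which $x$-items precede $y$. Calling their index set $A$ and its complement $B$, one gets $v(y)=\sum_{B}v_i$ and the total objective collapses to $2k+v(y)-\tfrac{v(y)^2}{2k}$, a genuine quadratic in the single split variable $v(y)$, maximized (with value $\tfrac52 k$) exactly when $v(y)=k$, i.e.\ when $A,B$ is a balanced partition. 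The separator item, not a sharper ramp, is the missing idea.
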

\begin{proof} We prove the lemma using a construction for computing the value of a quadratic function using only linear functions, the ordering problem, and our feature values. The maximum value of this quadratic function is then forced to coincide with the solution of a partition problem instance: Given a set of integers $I = \{ v_1, \ldots, v_n \}$, is $I$ dividable into two sets $A$ and $B$ such that $\sum A = \sum B$?

From the partition instance, let $k = \frac{1}{2} \sum_{1 \leq i \leq n} v_i$, we construct the following items and linear regression predictors (functions $v()$):
\begin{itemize}
\item $n$ items of type $x_1 \ldots x_n$, with $v(x_i) = v_i - \frac{v_i \cdot \texttt{sum}(y)}{2k}$, and
\item $1$ item of type $y$, with $v(y) = 2k - \sum_{1 \leq i \leq n} \texttt{sum}(x_i)$.
\end{itemize}
The objective is to maximize $\sum_{1 \leq i \leq n} v(x_i) + v(y)$. The corresponding decision problem is to ask whether there exists an ordering that achieves a value of $2\frac{1}{2}k$.

\paragraph($\Rightarrow$) Let $A, B$ be a partition of $I$ such that $\sum A = \sum B$, and let $a_1, \ldots,a_{|A|}$ and $b_1, \ldots, b_{|B|}$ be the corresponding items of type $x_1 \ldots x_n$. The following ordering then gives a value of $\sum_{1 \leq i \leq n} v_i$:
\[
a_1~\ldots~a_{|A|}~y~b_1~\ldots~b_{|B|}
\]
In this ordering, $\sum_{1 \leq i \leq |A|} v(a_i) = k$ by definition of $A$ and since $\texttt{sum}(y) = 0$ before item $y$ is auctioned. Consequently, $\sum_{1 \leq i \leq n} \texttt{sum}(x_i) = \sum_{1 \leq i \leq |A|} v(a_i) = k$, giving $v(y) = 2k  - k = k$. $\sum_{1 \leq i \leq |A|} v(a_1) + v(y)$ thus already obtains the objective value of $2k$, and therefore $\sum_{1 \leq i \leq |B|} v(b_1)$ should be equal to $\frac{1}{2}k$. By definition of $B$, $\sum_{1 \leq i \leq |B|} v(b_i) = k - \frac{k \cdot \texttt{sum}(y)}{2k}$. Since $v(y) = k$, $\texttt{sum}(y) = k$, and thus $\sum_{1 \leq i \leq |B|} v(b_i) = k - \frac{k \cdot k}{2k} = k - \frac{k}{2} = \frac{1}{2}k$, proving that the ordering obtains a value of $2\frac{1}{2}k$.

\paragraph($\Leftarrow$) To prove the other direction, let us further analyse the relation between the objective function $\sum_{1 \leq i \leq n} v(x_i) + v(y)$ and the auction ordering. The only term in the $v(x_i)$ predictors that depends on the ordering is $\texttt{sum}(y)$, all other terms are constants. This term is equal to zero for the $x_i$ items auctioned before $y$, and equal to $v(y)$ for the items auctioned after $y$. Similar to the ($\Rightarrow$) part, let $a_1, \ldots,a_{|A|}$ denote the $x_i$ items before $y$, $b_1, \ldots, b_{|B|}$ those after $y$, and $A,B$ the corresponding partition of items in $I$. The objective value is then given by $\sum_{1 \leq i \leq |A|} v(a_i) + v(y) + \sum_{1 \leq i \leq |B|} v(b_i)$. We analyse these three parts in turn.

\begin{itemize}
\item $\sum_{1 \leq i \leq |A|} v(a_i) = \sum_{v_i \in A} v_i$ since $\texttt{sum}(y) = 0$ for these items.
\item $v(y) = 2k - \sum_{1 \leq i \leq |A|} v(a_i) = 2k - \sum_{v_i \in A} v_i = \sum_{v_i \in B} v_i$.
\item $\sum_{1 \leq i \leq |B|} v(b_i) = \sum_{v_i \in B} v_i - \sum_{v_i \in B} \frac{v_i \cdot v(y)}{2k}$, since $v(y) = \sum_{v_i \in B} v_i$, this becomes $\sum_{v_i \in B} v_i - \frac{v(y)^2}{2k}$.
\end{itemize}

Since $\sum_{v_i \in A} v_i + \sum_{v_i \in B} v_i = 2k$, the overall objective function is given by:
\[
2k + v(y) - \frac{v(y)^2}{2k}
\]
which is maximized when $v(y) = k$ (for $k > 0$) with value $2k + k - \frac{k^2}{2k} = 2\frac{1}{2}k$. This exact value of $v(y) = k$ is obtained when $\sum_{v_i \in B} v_i = k$. The sets $A$ and $B$ thus give a partition of $I$.
\end{proof}

\emph{Remarks}. We proved the NP-completeness for the general case of Lemma~\ref{lem:linear}. However, we do not know whether the complexity holds for more realistic valuation functions that bidders have.


\begin{thebibliography}{10}
\expandafter\ifx\csname url\endcsname\relax
  \def\url#1{\texttt{#1}}\fi
\expandafter\ifx\csname urlprefix\endcsname\relax\def\urlprefix{URL }\fi
\expandafter\ifx\csname href\endcsname\relax
  \def\href#1#2{#2} \def\path#1{#1}\fi

\bibitem{Gabel08}
T.~Gabel, M.~Riedmiller, Adaptive reactive job-shop scheduling with learning
  agents, International Journal of Information Technology and Intelligent
  Computing 2~(4) (2008) 1--30.

\bibitem{huyet06}
A.~Huyet, Optimization and analysis aid via data-mining for simulated
  production systems, European Journal of Operational Research 173~(3) (2006)
  827--838.

\bibitem{LiO05}
X.~Li, S.~{\'O}lafsson, Discovering dispatching rules using data mining,
  Journal of Scheduling 8~(6) (2005) 515--527.

\bibitem{Bernhardt}
D.~Bernhardt, D.~Scoones, A note on sequential auctions, American Economic
  Review~(3)  653--657.

\bibitem{Heck97experiences}
E.~van Heck, P.~M.~A. Ribbers, Experiences with electronic auctions in the
  dutch flower industry, Electronic Markets 7~(4) (1997) 29--34.

\bibitem{Pinker10}
E.~J. Pinker, A.~Seidmann, Y.~Vakrat, Using bid data for the management of
  sequential, multi-unit, online auctions with uniformly distributed bidder
  valuations, European Journal of Operational Research 202~(2) (2010) 574--583.

\bibitem{Gallien05smart}
J.~Gallien, L.~M. Wein, A smart market for industrial procurement with capacity
  constraints, Management Science 51 (2005) 76--91.

\bibitem{Elmaghraby03importance}
W.~Elmaghraby, The importance of ordering in sequential auctions, Management
  Science 49 (2003) 673--682.

\bibitem{Grether09}
D.~M. Grether, C.~R. Plott, Sequencing strategies in large, competitive,
  ascending price automobile auctions: An experimental examination, Journal of
  Economic Behavior \& Organization 71~(2) (2009) 75--88.

\bibitem{Raviv06}
Y.~Raviv, New evidence on price anomalies in sequential auctions: Used cars in
  new jersey, Journal of Business \& Economic Statistics 24 (2006) 301--312.

\bibitem{Subramaniam09optimal}
R.~Subramaniam, R.~Venkatesh, Optimal bundling strategies in multiobject
  auctions of complements or substitutes, Marketing Science 28 (2009) 264--273.
\newblock \href {http://dx.doi.org/10.1287/mksc.1080.0394}
  {\path{doi:10.1287/mksc.1080.0394}}.

\bibitem{Pitchik09budget}
C.~Pitchik, Budget-constrained sequential auctions with incomplete information,
  Games and Economic Behavior 66~(2) (2009) 928--949.

\bibitem{Elkind07maximizing}
E.~Elkind, S.~Fatima, Maximizing revenue in sequential auctions, in:
  Proceedings of the 3rd international conference on Internet and network
  economics, WINE'07, Springer-Verlag, Berlin, Heidelberg, 2007, pp. 491--502.

\bibitem{jones1998efficient}
D.~R. Jones, M.~Schonlau, W.~J. Welch, Efficient global optimization of
  expensive black-box functions, Journal of Global optimization 13~(4) (1998)
  455--492.

\bibitem{shan2010survey}
S.~Shan, G.~G. Wang, Survey of modeling and optimization strategies to solve
  high-dimensional design problems with computationally-expensive black-box
  functions, Structural and Multidisciplinary Optimization 41~(2) (2010)
  219--241.

\bibitem{Garey79}
M.~R. Garey, D.~S. Johnson, Computers and intractability -- a guide to the
  theory of NP-completeness, W.H. Freeman and company, 1979.

\bibitem{mdp}
M.~L. Puterman, Markov Decision Processes: Discrete Stochastic Dynamic
  Programming, John Wiley \& Sons, Inc., New York, NY, USA, 1994.

\bibitem{cart84}
L.~Breiman, J.~Friedman, R.~Olshen, C.~Stone, {Classification and Regression
  Trees}, Wadsworth and Brooks, Monterey, CA, 1984.

\bibitem{Tibshirani94regressionshrinkage}
R.~Tibshirani, Regression shrinkage and selection via the lasso, Journal of the
  Royal Statistical Society, Series B 58 (1994) 267--288.

\bibitem{scikit}
scikit-learn: machine learning~in {P}ython, http://scikit-learn.org/.

\bibitem{verwer2012revenue}
S.~Verwer, Y.~Zhang, Revenue prediction in budget-constrained sequential
  auctions with complementarities, AAMAS '12, 2012, pp. 1399--1400.

\bibitem{vickrey61}
W.~Vickrey, Counterspeculation, auctions, and competitive sealed tenders, The
  Journal of Finance 16~(1) (1961) 8--37.

\bibitem{Vetsikas13}
I.~A. Vetsikas, Sequential auctions with budget-constrained bidders, in: IEEE
  10th International Conference on e-Business Engineering, IEEE, 2013, pp.
  17--24.

\bibitem{cplex}
{IBM ILOG CPLEX Optimizer},
  http://www-01.ibm.com/software/integration/optimization/cplex-optimizer/.

\bibitem{Bennett06}
K.~P. Bennett, E.~Parrado-Hern\'{a}ndez, The interplay of optimization and
  machine learning research, Journal of Machine Learning Research 7 (2006)
  1265--1281.

\bibitem{Meisel10}
S.~Meisel, D.~Mattfeld, Synergies of operations research and data mining,
  European Journal of Operational Research 206~(1) (2010) 1--10.

\bibitem{corne:synergies}
D.~Corne, C.~Dhaenens, L.~Jourdan, Synergies between operations research and
  data mining: The emerging use of multi-objective approaches, European Journal
  of Operational Research 221~(3) (2012) 469 -- 479.

\bibitem{BrijsSVW04}
T.~Brijs, G.~Swinnen, K.~Vanhoof, G.~Wets, Building an association rules
  framework to improve product assortment decisions, Data Mining and Knowledge
  Discovery 8~(1) (2004) 7--23.

\bibitem{ChangRaRo12}
M.~Chang, L.~Ratinov, D.~Roth, Structured learning with constrained conditional
  models, Machine Learning 88~(3) (2012) 399--431.

\bibitem{Defourny12scenario}
B.~Defourny, D.~Ernst, L.~Wehenkel, Scenario trees and policy selection for
  multistage stochastic programming using machine learning, Journal on
  ComputingPublished online before print.

\bibitem{rios2013derivative}
L.~M. Rios, N.~V. Sahinidis, Derivative-free optimization: A review of
  algorithms and comparison of software implementations, Journal of Global
  Optimization 56~(3) (2013) 1247--1293.

\bibitem{koziel2011surrogate}
S.~Koziel, D.~E. Ciaurri, L.~Leifsson, Surrogate-based methods, in:
  Computational Optimization, Methods and Algorithms, Springer, 2011, pp.
  33--59.

\bibitem{Bennett93bilinearseparation}
K.~P. Bennett, O.~L. Mangasarian, Bilinear separation of two sets in n-space,
  COMPUTATIONAL OPTIMIZATION AND APPLICATIONS 2 (1993) 207--227.

\bibitem{UneyT06}
F.~Uney, M.~Turkay, A mixed-integer programming approach to multi-class data
  classification problem, European Journal of Operational Research 173~(3)
  (2006) 910--920.

\bibitem{sra2012optimization}
S.~Sra, S.~Nowozin, S.~J. Wright, Optimization for machine learning, Mit Press,
  2012.

\bibitem{Carrizosa13}
E.~Carrizosa, D.~Romero~Morales, Review: Supervised classification and
  mathematical optimization, Computers and Operations Research 40~(1) (2013)
  150--165.

\bibitem{Raedt10constraint}
L.~D. Raedt, T.~Guns, S.~Nijssen, Constraint programming for data mining and
  machine learning, in: AAAI, 2010, pp. 1671--1675.

\bibitem{heuleverwer}
M.~J. Heule, S.~Verwer, Exact {DFA} identification using {SAT} solvers, in:
  Grammatical Inference: Theoretical Results and Applications, Vol. 6339 of
  Lecture Notes in Computer Science, Springer Berlin Heidelberg, 2010, pp.
  66--79.

\bibitem{delaHiguera10grammatical}
C.~de~la Higuera, Grammatical Inference: Learning Automata and Grammars,
  Cambridge University Press, New York, NY, USA, 2010.

\bibitem{Bishop06pattern}
C.~M. Bishop, Pattern Recognition and Machine Learning (Information Science and
  Statistics), Springer-Verlag New York, Inc., Secaucus, NJ, USA, 2006.

\bibitem{puterman2009markov}
M.~L. Puterman, Markov decision processes: discrete stochastic dynamic
  programming, Vol. 414, John Wiley \& Sons, 2009.

\bibitem{Benoit01multiple}
J.-P. Benoit, V.~Krishna, Multiple-object auctions with budget constrained
  bidders, Review of Economic Studies 68~(1) (2001) 155--79.

\bibitem{VetsikasJ09}
I.~A. Vetsikas, N.~R. Jennings, Sequential auctions with partially
  substitutable goods., Vol.~59 of Lecture Notes in Business Information
  Processing, 2009, pp. 242--258.

\bibitem{Boutilier99sequential}
C.~Boutilier, M.~Goldszmidt, B.~Sabata, Sequential auctions for the allocation
  of resources with complementarities, in: Proceedings of the 16th
  international joint conference on Artifical intelligence - Volume 1, Morgan
  Kaufmann Publishers Inc., San Francisco, CA, USA, 1999, pp. 527--534.

\bibitem{Goes10}
P.~B. Goes, G.~G. Karuga, A.~K. Tripathi, Understanding willingness-to-pay
  formation of repeat bidders in sequential online auctions, Information
  Systems Research 21 (2010) 907--924.

\bibitem{cramton2006combinatorial}
P.~Cramton, Y.~Shoham, R.~Steinberg, Combinatorial auctions, MIT Press, 2006.

\bibitem{BEN:09}
A.~Ben-Tal, L.~El~Ghaoui, A.~Nemirovski, Robust Optimization, Princeton Series
  in Applied Mathematics, Princeton University Press, 2009.

\bibitem{verbeeck2013multi}
D.~Verbeeck, F.~Maes, K.~De~Grave, H.~Blockeel, Multi-objective optimization
  with surrogate trees, in: Proceeding of the fifteenth annual conference on
  Genetic and evolutionary computation conference, ACM, 2013, pp. 679--686.

\end{thebibliography}
\end{document}